\documentclass[sn-mathphys]{JORSC-jnl}% Math and Physical Sciences Reference Style
\jyear{2023}%

\theoremstyle{thmstyleone}%
\newtheorem{theorem}{Theorem}%  meant for continuous numbers
\newtheorem{remark}{Remark}%

\theoremstyle{thmstylethree}%
\newtheorem{definition}{Definition}%
\newtheorem{lemma}{Lemma}
\usepackage{indentfirst}
\usepackage{graphicx}%
\usepackage{multirow}%
\usepackage{amsmath,amssymb,amsfonts}%
\usepackage{amsthm}%
\usepackage{mathrsfs}%
\usepackage[title]{appendix}%
\usepackage{xcolor}%
\usepackage{textcomp}%
\usepackage{manyfoot}%
\usepackage{booktabs}%
\usepackage{algorithm}%
\usepackage{algorithmicx}%
\usepackage{algpseudocode}%
\usepackage{listings}%
\usepackage[utf8]{inputenc}  % 处理特殊字符
\usepackage[T1]{fontenc}     % 支持欧洲字符集
\usepackage{bbding}%错号
\usepackage{pifont}%错号 
\usepackage{txfonts}
\usepackage{paralist}
\usepackage[misc]{ifsym}
\usepackage{epsfig}
\usepackage{epstopdf} 
\usepackage{bm,bbm}    
\usepackage{stfloats}
\usepackage{url}
\usepackage{verbatim}
\usepackage{makecell} 
\usepackage{tabularx}
\usepackage{supertabular} 
\usepackage{subcaption}
\usepackage{array}%调整列宽，并单元格内文本居中
\usepackage[demo,export]{adjustbox}

\hypersetup{
    colorlinks=true,
    linkcolor=blue,
    citecolor=blue,
    urlcolor=blue
}
\theoremstyle{thmstyleone} % 
\newtheorem{assumption}{Assumption} 
\begin{document}
\renewcommand{\qedsymbol}{}
\title[Heaviside Low-Rank Support Matrix Machine]{Heaviside Low-Rank Support Matrix Machine}

%%=============================================================%%
%% Prefix	-> \pfx{Dr}
%% GivenName	-> \fnm{Joergen W.}
%% Particle	-> \spfx{van der} -> surname prefix
%% FamilyName	-> \sur{Ploeg}
%% Suffix	-> \sfx{IV}
%% NatureName	-> \tanm{Poet Laureate} -> Title after name
%% Degrees	-> \dgr{MSc, PhD}
%% \author*[1,2]{\pfx{Dr} \fnm{Joergen W.} \spfx{van der} \sur{Ploeg} \sfx{IV} \tanm{Poet Laureate}
%%                 \dgr{MSc, PhD}}\email{iauthor@gmail.com}
%%=============================================================%%
\author[1]{\fnm{Xian-Chao} \sur{Xiu}}\email{xcxiu@shu.edu.cn}

\author[1]{\fnm{Sheng-Hao} \sur{Sun}}\email{m15834069451@163.com}

\author*[2]{\fnm{Xin-Rong} \sur{Li}}\email{lixinrong@mail.neu.edu.cn}

\author[3]{\fnm{Ji-Yuan} \sur{Tao}}\email{jtao@loyola.edu}

\affil[1]{\orgdiv{School of Mechatronic Engineering and Automation}, \orgname{Shanghai University}, \orgaddress{\city{Shanghai}, \postcode{200400}, \country{China}}}

\affil*[2]{\orgdiv{National Frontiers Science Center for Industrial Intelligence and Systems Optimization}, \orgname{Northeastern University}, \orgaddress{\city{Shenyang}, \postcode{110819}, \country{China}}}

\affil[3]{\orgdiv{Department of Mathematics and Statistics}, \orgname{Loyola University Maryland}, \orgaddress{\city{Baltimore}, \postcode{MD 21210}, \country{USA}}}

\equalcont{This paper was supported in part by the National Natural Science Foundation of China under Grant Nos. 12371306, 12501420.}

%%==================================%%
%% sample for unstructured abstract %%
%%==================================%%

\abstract{Support matrix machine (SMM) is an emerging classification framework that directly handles matrix-structured observations, thereby avoiding the spatial correlations destroyed by vectorization.
However, most existing SMM variants rely on convex or nonconvex surrogate loss functions, which may lead to high sensitivity to noise.
To address this issue, we propose a novel Heaviside low-rank SMM model called HL-SMM, which leverages the Heaviside loss instead of the common hinge or ramp losses for robustness. 
Moreover, the low-rank constraint is adopted to accurately characterize the inherent global structure.
In theory, we analyze the Karush-Kuhn-Tucker (KKT) points and rigorously prove the sufficient and necessary conditions.
In algorithms, we develop an effective proximal alternating minimization (PAM) scheme, where all subproblems have closed-form solutions.
Extensive experiments on benchmark datasets validate that the proposed HL-SMM achieves superior classification accuracy and robustness compared to state-of-the-art methods.}

\keywords{Classification, Support matrix machine, Heaviside loss, Proximal alternating minimization}
%%\pacs[JEL Classification]{D8, H51}

\pacs[Mathematics Subject Classification]{90C90, 90C26, 90C30}

\maketitle

\section{Introduction}\label{sec1}
Classification is a fundamental task in machine learning and pattern recognition, with extensive applications in signal and image processing. For example, in the medical field, classification has been shown to remarkably enhance the accuracy and efficiency of diagnosing various diseases \cite{yan2023reconfigurable}. Traditional classification methods mainly include support vector machine (SVM) \cite{cortes1995support}, k-nearest neighbors (KNN) \cite{cover1967nearest}, and logistic regression (LR) \cite{lavalley2008logistic}. Among these, SVM stands out as a supervised learning method with rigorous mathematical foundations and statistical learning theory, see, e.g., \cite{cervantes2020comprehensive,tanveer2024comprehensive} for recent surveys.

Over the past few decades, SVM has evolved into a series of variants, which can be categorized into linear SVM \cite{chen2026rml}, kernel SVM \cite{zhang2025stochastic}, and deep SVM \cite{okwuashi2020deep}.
By maximizing the separation margin between different classes, SVM exhibits strong generalization ability and excellent performance in processing high-dimensional feature spaces. 
However, these SVM variants deal with input data represented in the vector form. 
In reality, most data in the world exists in the matrix form, such as medical images and facial images.
As described in \cite{kumari2025support}, converting matrices to vectors not only destroys the inherent spatial correlations but also significantly increases the computational complexity.

Recently, the support matrix machine (SMM) was proposed and studied \cite{luo2015support}. 
It directly processed matrix data with the help of the hinge loss and nuclear norm, ensuring the effectiveness of classification while making full use of the data structure. 
After that, numerous variants have been developed to enhance its efficiency and generalization.
For example, Liang et al. \cite{liang2022adaptive} constructed the least-squares SMM (LS-SMM) by replacing the hinge loss with the least-squares loss for matrix-structured EEG classification. 
Li et al. \cite{li2022highly} introduced non-parallel hyperplanes and used small infrared thermal images for fault diagnosis. 
Li et al. \cite{li2022auto} improved the generalization performance of SMM by extending the input matrix through an auto-correlation function transformation.
In fact, the hinge loss function used in SMM is a convex approximation of the Heaviside loss. While this simplifies computation, it makes the method more sensitive to noise. 
To address this limitation, Feng et al. \cite{feng2022support} developed a new method with the pinball loss (Pin-SMM) that better captured the underlying data distribution and significantly enhanced its resistance to noise near the decision boundary.
Other variants, such as incorporating the truncated hinge loss \cite{qian2019robust} and the truncated pinball loss \cite{li2024support}, are also introduced.
Fig. \ref{fig1} visualizes the coefficient matrices obtained by different loss functions. 
The more localized and cleaner patterns indicate that the loss can better suppress noise and outliers. 
It can be seen that the Heaviside loss performs better than others.

\begin{figure*}[t]
    \centering

    \begin{subfigure}[b]{0.45\textwidth} 
        \centering

        \includegraphics[width=0.7\linewidth]{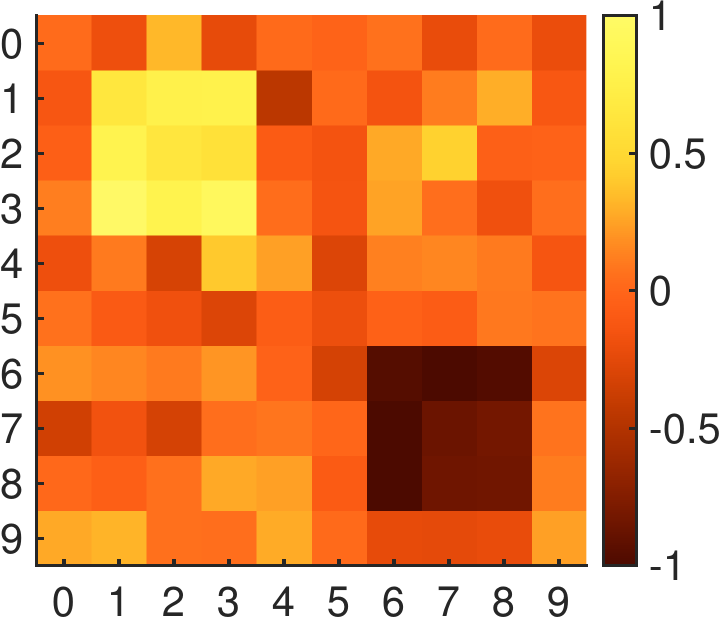} 
        \caption{Hinge loss}
    \end{subfigure}%
        \hspace{-0.5cm}
    \begin{subfigure}[b]{0.45\textwidth}
        \centering
        \includegraphics[width=0.7\linewidth]{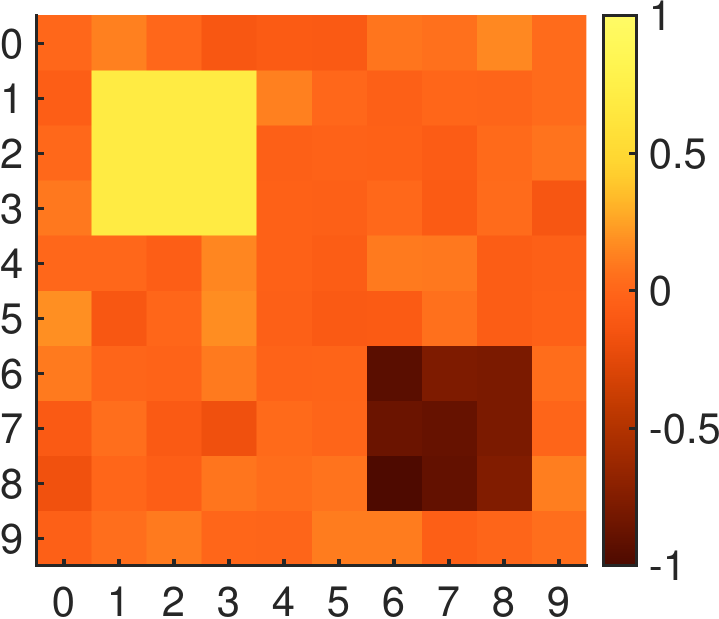}
        \caption{Pinball loss}
    \end{subfigure}%

%    \vspace{0.3cm} 

    \begin{subfigure}[b]{0.45\textwidth}
        \centering
        \includegraphics[width=0.7\linewidth]{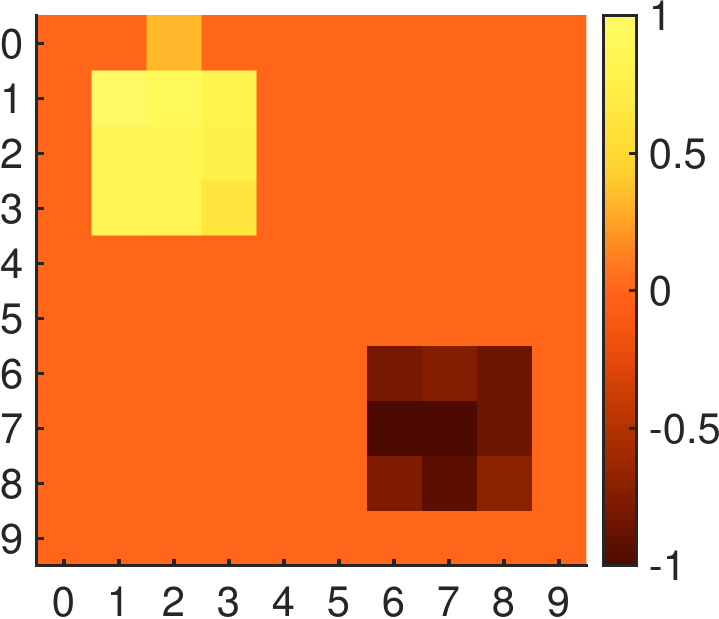}
        \caption{Truncated pinball loss}
    \end{subfigure}%
        \hspace{-0.5cm}
    \begin{subfigure}[b]{0.45\textwidth}
        \centering
        \includegraphics[width=0.7\linewidth]{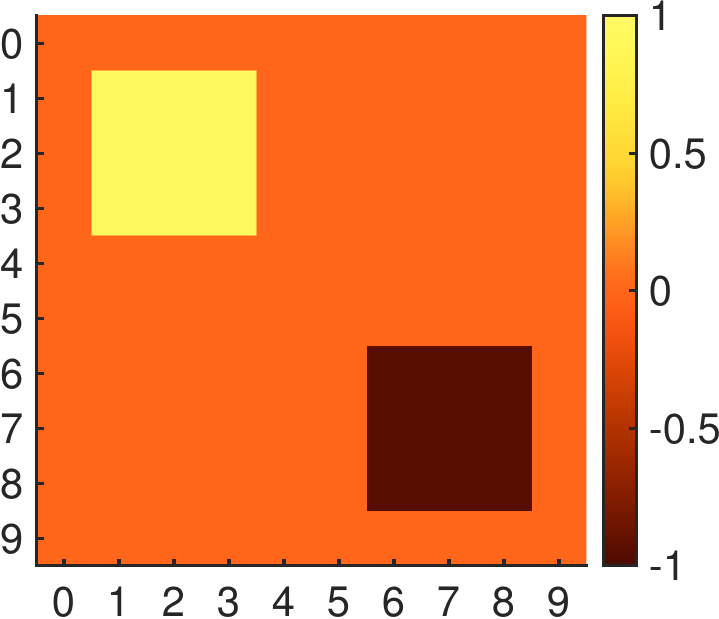}
        \caption{Heaviside loss}
    \end{subfigure}%

    \caption{Coefficient matrices obtained by different loss functions.}
    \label{fig1}
\end{figure*}

Although these aforementioned SMM methods have obtained excellent noise resistance, the inherent sparsity and low-rank characteristics of the input data were not fully considered, potentially leading to poor performance when dealing with complex intrinsic structures and data volumes \cite{xiu2022efficient,xiu2025bi}.
Zheng et al. \cite{zheng2018sparse} integrated the $\ell_1$-norm with SMM to filter out redundant features, thus achieving more interpretable modeling.
Wang et al. \cite{wang2022sparse} employed a twin non-parallel hyperplane structure with the $\ell_1$-norm regularization to ensure sparsity and discriminativeness.
Unlike the previous two methods, Gu et al. \cite{gu2021ramp} introduced the nonconvex ramp loss function to SMM called Ramp-SMM.
Note that the rank in the objective function is often replaced with the nuclear norm for overcoming computational difficulties \cite{li2023normal}.
However, this often tends to over-shrink singular values, and thus distort the underlying low-rank structure, especially when the true intrinsic dimension is small \cite{huang2018rank}.
More importantly, as stated in \cite{zhang2013counterexample}, the solution to the nuclear norm minimization problem is not necessarily the solution to the corresponding rank minimization problem.
\textit{A natural question arises: can we construct a new SMM variant that combines the Heaviside loss for robustness and the rank constraint for structure preservation?}

This article provides a definitive answer and proposes a rank-constrained SMM with the Heaviside loss, termed HL-SMM. 
Specifically, the Heaviside loss improves robustness to noise and outliers, while the rank constraint explicitly enforces the low-dimensional representation. 
In fact, to our best knowledge, this is the first SMM variant that introducing the Heaviside loss. 
However, the resulting problem is nonconvex and nonsmooth. 
To this end, we establish the necessary and sufficient conditions for KKT points and develop an effective PAM algorithm with closed-form updates for each subproblem. 
Furthermore, we conduct extensive experiments on six benchmark datasets, including robustness evaluation and parameter sensitivity analysis, to evaluate the performance of our proposed method.

The remainder of this paper is organized as follows.
Section \ref{sec2} introduces the preliminaries.
Section \ref{sec3} presents the proposed model and optimization algorithm.
Section \ref{sec4} reports the experimental results and discussions.
Finally, Section \ref{sec5} concludes this paper.

\section{Preliminaries}\label{sec2}

In this paper, matrices are represented by bold uppercase letters, vectors by bold lowercase letters, and scalars by lowercase letters. Let $\mathbb{R}^{p \times q}$ and $\mathbb{R}^p$ be the sets of $p \times q$-dimensional matrices and $p$-dimensional vectors, respectively. 
For a matrix $\mathbf{W}\in\mathbb{R}^{p\times q}$, denote its transpose as $\mathbf{W}^\top$.
The inner product of two matrices $\mathbf{W},\mathbf{Y}\in\mathbb{R}^{p\times q}$ is
$\langle\mathbf{W},\mathbf{Y}\rangle = \operatorname{tr}(\mathbf{W}^\top\mathbf{Y}),$
where $\operatorname{tr}(\cdot)$ denotes the trace.
The Frobenius norm of $\mathbf{W}$ is $\|\mathbf{W}\|_{\mathrm{F}} = \sqrt{\langle\mathbf{W},\mathbf{W}\rangle}$, and the nuclear norm is $\|\mathbf{W}\|_*$. 
In addition, the singular value decomposition (SVD) is denoted as $
\mathbf{W}=\mathbf{U}\,\Sigma(\mathbf{W})\,\mathbf{V}^\top$,
where $\mathbf{U}\in\mathcal{O}^p$ and $\mathbf{V}\in\mathcal{O}^q$ are the orthogonal matrices, $\Sigma(\mathbf{W})$ is the diagonal matrix, whose diagonal elements are
called the singular values and reoresented by $\sigma_i(\mathbf{W})$.
Note that we let $\mathbf{W}_i$ be the $i$‑th matrix sample within a given set of matrices.

For a vector $\mathbf{z}\in\mathbb{R}^p$, the $i$‑th element is denoted as $(\mathbf{z})_i$.
Define the Euclidean norm as $\|\mathbf{z}\|$, the $\ell_0$-norm as $\|\mathbf{z}\|_0$, which counts the number of
nonzero elements in the vector $\mathbf{z}$. 
For a scalar $t \in \mathbb{R}$, define $[t]_+ = \max\{t, 0\}$, thus $\mathbf{z}_+ = ([(\mathbf{z})_1]_+, \cdots, [(\mathbf{z})_p]_+)^\top$. The Heaviside loss is
\begin{equation*}
\|\mathbf{z}_+\|_0=\sum_{i=1}^m \ell_{0/1}((\mathbf{z})_i), \quad
\ell_{0/1}((\mathbf{z})_i)=
\begin{cases}
1,& (\mathbf{z})_i>0,\\
0,& (\mathbf{z})_i\le 0.
\end{cases}
\end{equation*}

Let $\mathscr{A}:\mathbb{R}^{p\times q}\rightarrow\mathbb{R}^m$ be the linear operator defined by
\begin{equation*}
\mathscr{A}(\mathbf{W})=
(\langle \mathbf{A}_1,\mathbf{W}\rangle,\cdots,\langle \mathbf{A}_m,\mathbf{W}\rangle)^\top,
\end{equation*}
where each $\mathbf{A}_i\in\mathbb{R}^{p\times q}$ is the given matrix. The adjoint operator $\mathscr{A}^*:\mathbb{R}^m\rightarrow\mathbb{R}^{p\times q}$ is defined by 
\begin{equation*}
\langle \mathscr{A}(\mathbf{W}),\bm{\lambda}\rangle
=
\langle \mathbf{W},\mathscr{A}^*(\bm{\lambda})\rangle,\quad
\forall\,\mathbf{W}\in\mathbb{R}^{p\times q}, \bm{\lambda}\in\mathbb{R}^m.
\end{equation*}
It follows directly that
\begin{equation*}
\mathscr{A}^*(\bm{\lambda})=\sum_{i=1}^m (\bm\lambda)_i \mathbf{A}_i.
\end{equation*}
We will frequently use the affine constraint
\begin{equation*}
\mathbf{z}=\mathbf{1}+\mathscr{A}(\mathbf{W}),
\end{equation*}
and  the affine set
\begin{equation*}
\mathcal{L}=\{(\mathbf{W},\mathbf{z})\in\mathbb{R}^{p\times q}\times\mathbb{R}^m \mid \mathbf{1}+\mathscr{A}(\mathbf{W})=\mathbf{z}\},
\end{equation*}
where  $\mathbf{1}$ is the vector of all‑ones in $\mathbb{R}^m$.  Furthermore, define the rank-constrained set
\begin{equation*}
\mathcal{R}=\{\mathbf{W}\in\mathbb{R}^{p\times q} \mid \mathrm{rank}(\mathbf{W})\le r\},
\end{equation*}
where $r<\min\{p,q\}$ is the prescribed rank parameter. Then, the feasible region is $\mathcal{F}=\mathcal{L}\cap\mathcal{R}$.

In order to characterize the rank-constrained optimization, some explicit expressions related to the projection and the normal cone of $\mathcal{R}$ are given as follows, see \cite{li2023normal} for details.

\begin{lemma}[Projection of $\mathcal{R}$]
Given $\mathbf{W}\in\mathbb{R}^{p\times q}$ with singular values $\sigma_1(\mathbf{W})\ge\cdots\ge\sigma_s(\mathbf{W})>0$,
the (possibly set-valued) projection onto the rank-constrained set is
\begin{equation*}
\Pi_{\mathcal{R}}(\mathbf{W})
=
\{
\mathbf{U}\,\mathrm{Diag}(\sigma_1(\mathbf{W}),\cdots,\sigma_r(\mathbf{W}),0,\cdots,0)\,\mathbf{V}^\top \mid  (\mathbf{U},\mathbf{V})\in\mathcal{O}^{p,q}(\mathbf{W})
\},
\end{equation*}
when $\sigma_r(\mathbf{W})>\sigma_{r+1}(\mathbf{W})$ (or $\sigma_r(\mathbf{W})=0$), the projection is single-valued. 
\end{lemma}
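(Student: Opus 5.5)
The plan is to reduce the projection onto $\mathcal{R}$ to the Eckart--Young--Mirsky theorem, and to obtain the full solution set and the uniqueness criterion from the equality case of von Neumann's trace inequality. Here $\mathcal{O}^{p,q}(\mathbf{W})$ denotes the pairs $(\mathbf{U},\mathbf{V})\in\mathcal{O}^p\times\mathcal{O}^q$ with $\mathbf{W}=\mathbf{U}\Sigma(\mathbf{W})\mathbf{V}^\top$.

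First, since $\mathcal{R}$ is closed and nonempty, $\Pi_{\mathcal{R}}(\mathbf{W})$ is nonempty. For any $\mathbf{X}\in\mathcal{R}$ we expand
\begin{equation*}
\|\mathbf{W}-\mathbf{X}\|_{\mathrm{F}}^2=\|\mathbf{W}\|_{\mathrm{F}}^2-2\langle\mathbf{W},\mathbf{X}\rangle+\|\mathbf{X}\|_{\mathrm{F}}^2 .
\end{equation*}
Von Neumann's inequality gives $\langle\mathbf{W},\mathbf{X}\rangle\le\sum_i\sigma_i(\mathbf{W})\sigma_i(\mathbf{X})$. Since $\sigma_i(\mathbf{X})=0$ for $i>r$, this yields
\begin{equation*}
\|\mathbf{W}-\mathbf{X}\|_{\mathrm{F}}^2\ge\sum_{i\le r}\bigl(\sigma_i(\mathbf{W})-\sigma_i(\mathbf{X})\bigr)^2+\sum_{i>r}\sigma_i(\mathbf{W})^2\ge\sum_{i>r}\sigma_i(\mathbf{W})^2 .
\end{equation*}
Every matrix $\mathbf{U}\,\mathrm{Diag}(\sigma_1,\dots,\sigma_r,0,\dots,0)\mathbf{V}^\top$ with $(\mathbf{U},\mathbf{V})\in\mathcal{O}^{p,q}(\mathbf{W})$ lies in $\mathcal{R}$ and attains this bound. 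This shows the inclusion $\supseteq$ together with the optimal value.

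For the reverse inclusion, let $\mathbf{X}$ be a minimizer. Equality must hold in both inequalities, so $\sigma_i(\mathbf{X})=\sigma_i(\mathbf{W})$ for $i\le r$, and $\langle\mathbf{W},\mathbf{X}\rangle=\sum_i\sigma_i(\mathbf{W})\sigma_i(\mathbf{X})$. The equality case of von Neumann's inequality says that $\mathbf{W}$ and $\mathbf{X}$ then admit a simultaneous ordered SVD: there is a pair $(\mathbf{U},\mathbf{V})$ with $\mathbf{W}=\mathbf{U}\Sigma(\mathbf{W})\mathbf{V}^\top$ and $\mathbf{X}=\mathbf{U}\Sigma(\mathbf{X})\mathbf{V}^\top$. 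Hence $(\mathbf{U},\mathbf{V})\in\mathcal{O}^{p,q}(\mathbf{W})$ and $\mathbf{X}$ has the claimed form. This step is the main obstacle, because the equality characterization of von Neumann's inequality is nontrivial. I would cite it (Lewis's result on unitarily invariant functions, or the treatment in \cite{li2023normal}) rather than reprove it. An alternative that avoids it is to first-order optimality: writing $\mathbf{X}=\mathbf{P}\mathbf{Q}^\top$ and requiring stationarity in $\mathbf{P}$ and $\mathbf{Q}$ shows that $\mathbf{X}$'s row and column spaces are spanned by singular vectors of $\mathbf{W}$.

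For single-valuedness, note that the matrix $\sum_{i\le r}\sigma_i\mathbf{u}_i\mathbf{v}_i^\top$ is independent of the choice of $(\mathbf{U},\mathbf{V})$ whenever the group of singular values $\{\sigma_1,\dots,\sigma_r\}$ is not split. Singular vectors are unique up to orthogonal rotations within each block of equal singular values, and the corresponding sum over a full block is invariant. If $\sigma_r>\sigma_{r+1}$, the first $r$ singular values consist of complete blocks, so the truncation is unique. If $\sigma_r=0$, then $\mathrm{rank}(\mathbf{W})\le r$, so $\mathbf{W}\in\mathcal{R}$ and the projection is $\mathbf{W}$ itself.
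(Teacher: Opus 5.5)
Your proof is correct. The paper itself gives no argument for this lemma: it states it as a known fact and defers to \cite{li2023normal}, so your derivation is a self-contained replacement for that citation rather than a competing route. What your version adds is a clear view of where each part of the content comes from:
\begin{itemize}
\item the optimal value and the inclusion $\supseteq$ follow from von Neumann's trace inequality alone (Eckart--Young--Mirsky);
\item the inclusion $\subseteq$ rests entirely on the equality case of that inequality, which gives a simultaneous ordered SVD of $\mathbf{W}$ and the minimizer;
\item single-valuedness reduces to the fact that, when $\sigma_r>\sigma_{r+1}$, the truncation $\sum_{i\le r}\sigma_i\mathbf{u}_i\mathbf{v}_i^\top$ is a sum over complete blocks of equal singular values.
\end{itemize}
You also fix the meaning of $\mathcal{O}^{p,q}(\mathbf{W})$, which the paper never defines.

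One point in the uniqueness step should be made precise. Take a block $a$ of indices sharing a value $\bar\sigma>0$. The admissible changes of singular bases act on the columns $\mathbf{U}_a$ and $\mathbf{V}_a$ by the \emph{same} orthogonal matrix. That is exactly why $\bar\sigma\,\mathbf{U}_a\mathbf{V}_a^\top$ is invariant; with independent rotations, as your phrasing could be read, the claim would fail. A basis-free way to say it: the block sum equals $\mathbf{U}_a\mathbf{U}_a^\top\mathbf{W}$, and $\mathbf{U}_a\mathbf{U}_a^\top$ is the orthogonal projector onto the eigenspace of $\mathbf{W}\mathbf{W}^\top$ for $\bar\sigma^2$, hence intrinsic.

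The factorization alternative, as sketched, does not finish the argument. It only shows that the singular pairs of $\mathbf{X}$ are singular pairs of $\mathbf{W}$ and that $\mathbf{W}-\mathbf{X}$ annihilates them on both sides. You would still need two further facts:
\begin{itemize}
\item $\sigma(\mathbf{W})$ is then the union of $\sigma(\mathbf{X})$ and $\sigma(\mathbf{W}-\mathbf{X})$;
\item minimality of $\|\mathbf{W}-\mathbf{X}\|_{\mathrm{F}}^2$ forces $\sigma(\mathbf{X})$ to consist of the $r$ largest singular values of $\mathbf{W}$.
\end{itemize}
Citing the von Neumann equality case, as you propose, is the cleaner choice.
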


From the above lemma,  it can be seen that projecting a matrix onto the set $\mathcal{R}$ is equivalent to retaining its first $r$ largest singular values and discarding the rest, which is essentially a hard thresholding operation on the singular values.
In the following lemma, we will give the regular normal cone to $\mathcal{R}$.

\begin{lemma}[Normal cone of $\mathcal{R}$] \label{def3}
Let $\mathbf{W}\in\mathcal{R}$ with $\mathrm{rank}(\mathbf{W})=s\le r$ and SVD as above. The regular normal cone to $\mathcal{R}$
at $\mathbf{W}$ admits the representation
\begin{equation*}
\mathrm{N}_{\mathcal{R}}(\mathbf{W})
=
\begin{cases}
\displaystyle
\bigl\{
\mathbf{U}_{\Gamma_p^\perp} \mathbf{D} \mathbf{V}_{\Gamma_q^\perp}^\top \mid
\mathbf{D}\in\mathbb{R}^{(p-r)\times(q-r)}
\bigr\},
& s = r,
\\[4pt]
\{ \mathbf{0} \},
& s < r,
\end{cases}
\end{equation*}
where $\Gamma_p$ is the index set of nonzero singular values, $\Gamma_p^\perp=\{1,\ldots,p\}\setminus\Gamma_p$, $\mathbf{U}_{\Gamma_p^\perp}$  is the submatrix of $\mathbf{U}$ indexed by $\Gamma_p^\perp$, and $\mathbf{V}_{\Gamma_q^\perp}$  is the submatrix of $\mathbf{V}$ indexed by $\Gamma_q^\perp$.
\end{lemma}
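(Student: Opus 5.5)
The plan is to compute the regular (Fréchet) normal cone directly from its definition. Recall that
\[
\widehat{\mathrm N}_{\mathcal R}(\mathbf W)=\Big\{\mathbf Y \;\Big|\; \limsup_{\mathbf W'\to\mathbf W,\;\mathbf W'\in\mathcal R}\frac{\langle \mathbf Y,\mathbf W'-\mathbf W\rangle}{\|\mathbf W'-\mathbf W\|_{\mathrm F}}\le 0\Big\}.
\]
Equivalently, it is the polar of the Bouligand tangent cone $\mathrm T_{\mathcal R}(\mathbf W)$. So the work is to identify that tangent cone in the two regimes $s<r$ and $s=r$, and then take polars. Throughout, fix an SVD $\mathbf W=\mathbf U\Sigma(\mathbf W)\mathbf V^\top$ and write $\mathbf U=[\mathbf U_{\Gamma_p},\mathbf U_{\Gamma_p^\perp}]$ and $\mathbf V=[\mathbf V_{\Gamma_q},\mathbf V_{\Gamma_q^\perp}]$.

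\textbf{Case $s<r$.} For any direction $\mathbf H$ of rank one, $\mathbf W+t\mathbf H$ has rank at most $s+1\le r$, so $\mathbf W+t\mathbf H\in\mathcal R$ for all $t$. Hence every rank-one matrix lies in the tangent cone. For $\mathbf Y$ in the regular normal cone this forces $\langle \mathbf Y,\mathbf H\rangle\le 0$ for all rank-one $\mathbf H$. Since $-\mathbf H$ is also rank one, in fact $\langle \mathbf Y,\mathbf H\rangle=0$. Rank-one matrices span $\mathbb R^{p\times q}$, so $\mathbf Y=\mathbf 0$.

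\textbf{Case $s=r$.} Near $\mathbf W$, the set $\mathcal R$ coincides with the set of matrices of rank exactly $r$, because the $r$-th singular value is continuous and stays positive. That set is a smooth embedded manifold. Its tangent space at $\mathbf W$ is
\[
\mathrm T=\{\mathbf U_{\Gamma_p}\mathbf A^\top+\mathbf B\mathbf V_{\Gamma_q}^\top \mid \mathbf A\in\mathbb R^{q\times r},\ \mathbf B\in\mathbb R^{p\times r}\}.
\]
To verify this, I would differentiate curves $(\mathbf U_{\Gamma_p}+t\mathbf B')(\Sigma_r+t\mathbf C)(\mathbf V_{\Gamma_q}+t\mathbf A')^\top$ for the inclusion $\supseteq$, and use a dimension count $r(p+q-r)$ for equality. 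For a smooth manifold, the regular normal cone is the orthogonal complement of the tangent space, so it remains to compute $\mathrm T^\perp$.

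A matrix $\mathbf Y$ is orthogonal to $\mathrm T$ if and only if $\mathbf Y\mathbf V_{\Gamma_q}=\mathbf 0$ and $\mathbf U_{\Gamma_p}^\top\mathbf Y=\mathbf 0$. These two conditions hold exactly when $\mathbf Y=\mathbf U_{\Gamma_p^\perp}\mathbf D\mathbf V_{\Gamma_q^\perp}^\top$ with $\mathbf D\in\mathbb R^{(p-r)\times(q-r)}$, which is the stated formula.

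The main obstacle is justifying rigorously, in the case $s=r$, that the local structure is a manifold with exactly this tangent space. If I prefer to avoid manifold theory, I would argue directly from the limsup definition. For one inclusion, take $\mathbf Y=\mathbf U_{\Gamma_p^\perp}\mathbf D\mathbf V_{\Gamma_q^\perp}^\top$ and any nearby $\mathbf W'$ of rank at most $r$. Then $\langle \mathbf Y,\mathbf W'-\mathbf W\rangle=\langle \mathbf Y,\mathbf W'\rangle$, which equals $\langle \mathbf D,\mathbf U_{\Gamma_p^\perp}^\top\mathbf W'\mathbf V_{\Gamma_q^\perp}\rangle$. Writing $\mathbf W'$ in block form relative to $(\mathbf U,\mathbf V)$, the block $\mathbf W'_{22}=\mathbf U_{\Gamma_p^\perp}^\top\mathbf W'\mathbf V_{\Gamma_q^\perp}$ satisfies $\mathbf W'_{22}=\mathbf W'_{21}(\mathbf W'_{11})^{-1}\mathbf W'_{12}$ by a Schur complement argument. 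Here $\mathbf W'_{11}$ is invertible near $\Sigma_r$, and rank $\le r$ forces the Schur complement to vanish. Since $\mathbf W'_{21}$ and $\mathbf W'_{12}$ are $O(\|\mathbf W'-\mathbf W\|_{\mathrm F})$, this gives $\|\mathbf W'_{22}\|=O(\|\mathbf W'-\mathbf W\|_{\mathrm F}^2)$, so the limsup is $0$. The reverse inclusion follows from the curves above, which produce every element of $\mathrm T$ and $-\mathrm T$ as a tangent direction.
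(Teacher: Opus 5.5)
Your proof is correct and complete. The $s<r$ case via rank-one directions is airtight. In the $s=r$ case, your elementary version closes both inclusions without manifold theory or the dimension count: the Schur-complement bound $\|\mathbf U_{\Gamma_p^\perp}^\top\mathbf W'\mathbf V_{\Gamma_q^\perp}\|_{\mathrm F}=O(\|\mathbf W'-\mathbf W\|_{\mathrm F}^2)$ gives one inclusion, and the explicit curves through $\mathbf W$ give the other.

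The paper itself gives no proof of this lemma; it imports the formula from \cite{li2023normal}. The usual derivation in this literature computes the full Bouligand tangent cone of $\mathcal R$, namely $\{\mathbf H \mid \operatorname{rank}(\mathbf U_{\Gamma_p^\perp}^\top\mathbf H\mathbf V_{\Gamma_q^\perp})\le r-s\}$ with $\mathbf U_{\Gamma_p^\perp},\mathbf V_{\Gamma_q^\perp}$ having $p-s$ and $q-s$ columns (cf. \cite{schneider2015convergence2}). It then takes the polar, which treats both cases with a single formula.

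You instead compute only as much of the tangent cone as the polar actually sees. When $s<r$ this is a symmetric spanning set (the rank-one matrices). When $s=r$ it is the linear space $\{\mathbf U_{\Gamma_p}\mathbf A^\top+\mathbf B\mathbf V_{\Gamma_q}^\top\}$, supplemented by the direct Schur-complement check that the candidates really are regular normals. This makes your argument shorter and self-contained. The tangent-cone route needs a comparable perturbation estimate to establish the cone formula, but it yields the tangent cone itself, which is reusable elsewhere (e.g.\ in projected-gradient analyses on $\mathcal R$).

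As a side benefit, your characterization $\mathbf Y\mathbf V_{\Gamma_q}=\mathbf 0$, $\mathbf U_{\Gamma_p}^\top\mathbf Y=\mathbf 0$ shows that the set in the statement does not depend on the chosen SVD, which the lemma leaves implicit.
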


At the end of this section, the proximal operator of $\|(\cdot)_+\|_0$ is described, see \cite{wang2021support2} for details.

\begin{lemma}[Proximal operator of $\|(\cdot)_+\|_0$]
For $\gamma>0$, define the proximal mapping of the function $\gamma\|\mathbf{z}_+\|_0$ by
\begin{equation*}
\mathrm{Prox}_{\gamma \|(\cdot)_+\|_0}(\mathbf{x})
=
\arg\min_{\mathbf{z}\in\mathbb{R}^p}\quad
\{
\gamma\|\mathbf{z}_+\|_0+\frac{1}{2}\|\mathbf{z}-\mathbf{x}\|^2
\}.
\end{equation*}
The solution is
\begin{equation*}
[\mathrm{Prox}_{\gamma\|(\cdot)_+\|_0}(\mathbf{x})]_i
=
\begin{cases}
0,& 0<(\mathbf{x})_i\le \sqrt{2\gamma},\\
(\mathbf{x})_i,& (\mathbf{x})_i\le 0 ~\text{or}~(\mathbf{x})_i>\sqrt{2\gamma}.
\end{cases}
\end{equation*}
\end{lemma}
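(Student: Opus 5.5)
The minimization defining $\mathrm{Prox}_{\gamma\|(\cdot)_+\|_0}(\mathbf{x})$ separates across coordinates, because
\begin{equation*}
\gamma\|\mathbf{z}_+\|_0+\tfrac12\|\mathbf{z}-\mathbf{x}\|^2=\sum_{i}\Bigl(\gamma\,\ell_{0/1}((\mathbf{z})_i)+\tfrac12((\mathbf{z})_i-(\mathbf{x})_i)^2\Bigr).
\end{equation*}
It therefore suffices to solve, for a fixed scalar $x=(\mathbf{x})_i$, the problem $\min_{z\in\mathbb{R}} \phi(z):=\gamma\,\ell_{0/1}(z)+\tfrac12(z-x)^2$, and then assemble the coordinates. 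I will minimize $\phi$ separately on the two pieces $\{z\le 0\}$ and $\{z>0\}$, where $\ell_{0/1}$ is constant, and compare the two optimal values.

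On $\{z\le 0\}$ the function is $\tfrac12(z-x)^2$. Its minimizer is the projection $z=\min\{x,0\}$, with value $\tfrac12[x]_+^2$. On $\{z>0\}$ the function is $\gamma+\tfrac12(z-x)^2$. If $x>0$ the infimum $\gamma$ is attained at $z=x$. If $x\le 0$ the infimum is $\gamma+\tfrac12x^2$, and it is not attained since the minimizer would be $z\to 0^+$.

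Now compare the two values in cases.
\begin{itemize}
\item If $x\le 0$: the first piece gives value $0$ at $z=x$, while the second piece only approaches a value $\ge\gamma>0$. So the unique minimizer is $z=x$.
\item If $x>0$: the first piece gives $\tfrac12x^2$ at $z=0$, and the second gives $\gamma$ at $z=x$. So $z=0$ is optimal when $\tfrac12x^2\le\gamma$, i.e.\ $x\le\sqrt{2\gamma}$, and $z=x$ is optimal when $x>\sqrt{2\gamma}$.
\end{itemize}
This reproduces the stated formula.

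The only delicate point is the tie $x=\sqrt{2\gamma}$. There $\phi(0)=\phi(x)=\gamma$, so the proximal map is really set-valued, equal to $\{0,\sqrt{2\gamma}\}$. The lemma's formula makes the selection $0$. I will remark that the lemma should be read as giving a particular element of the proximal set at this tie, consistent with \cite{wang2021support2}.

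I expect no real obstacle beyond stating the separability cleanly and handling this non-attainment and tie bookkeeping.
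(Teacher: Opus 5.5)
Your proof is correct and complete. The paper gives no proof of its own and simply defers to \cite{wang2021support2}; your argument (separate coordinatewise, minimize over $\{z\le 0\}$ and $\{z>0\}$, then compare $\tfrac12 x^2$ with $\gamma$) is the standard argument for this operator. Your tie remark is also right: at $(\mathbf{x})_i=\sqrt{2\gamma}$ the argmin in that coordinate is $\{0,\sqrt{2\gamma}\}$, so the stated formula should be read as a particular selection from the set-valued proximal map.
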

Here, $\mathrm{Prox}_{\gamma\|(\cdot)_+\|_0}$ performs the hard thresholding on the positive entries of $\mathbf{x}$, and this operator is separable across coordinates. 

%-------------------------------------------------
\section{The Proposed Method}\label{sec3}
%-------------------------------------------------

\subsection{New Formulation}
Given training data $(\mathbf{X}_i, y_i), i = 1,\cdots, m$, where $\mathbf{X}_i\mathbf\in\mathbb{R}^{p\times q}$ is the observation, $y_i\in \{-1 , 1\}$ is the label. The SMM classification is to find a hyperplane, i.e.,
\begin{equation*}
y_i = \langle \mathbf{W},\mathbf {X}_i\rangle+b,
\end{equation*}
such that the data with different labels can be separated by the hyperplane. The original SMM problem can be mathematically expressed as
\begin{equation}\label{RSMM}
\begin{aligned}
\min\limits_{\mathbf{W},b}\quad
 &\frac{1}{2}\langle\mathbf{W},\mathbf{W}\rangle \\
		\text{s.t.} \quad& y_i(\langle \mathbf{W},\mathbf {X}_i\rangle+b)\geq 1, i=1,\cdots, m,\\
		 & \text{rank}(\mathbf{W}) \leq r,
\end{aligned}
\end{equation}
where $r$ is a positive integer satisfying $r<\min\{p,q\}$. Problem \eqref{RSMM} is based on the assumption that the two types of data can be successfully separated by the hyperplane. However, in practice, this is usually not the case, and the rank constraint is a nonconvex problem, making the optimization generally challenging.  A more practical and popular method is the regularized penalty model, which incorporates the nuclear norm relaxation, defined as
\begin{equation}\label{SMM-relax}
\begin{aligned}
\min_{\mathbf{W}, b} \quad
& \frac{1}{2}\langle\mathbf{W},\mathbf{W}\rangle  + \alpha \|\mathbf{W}\|_* + \beta\sum\limits_{i=1}^m \phi\left[1-y_i(\langle \mathbf W,\mathbf X_i\rangle+b)\right],\\
\end{aligned}
\end{equation}
where $\alpha, \beta > 0$ are the penalty parameters and $\phi$ corresponds to the loss function. For example, the classic Hinge-SMM \cite{luo2015support} adopts $\phi$ with the hinge loss, and in this case, problem \eqref{SMM-relax}  becomes
\begin{equation*}
\begin{aligned}
\min_{\mathbf{W}, b} \quad
&\frac{1}{2}\langle\mathbf{W},\mathbf{W}\rangle  + \alpha \|\mathbf{W}\|_* + \beta\sum\limits_{i=1}^m \left[1-y_i(\langle \mathbf W,\mathbf X_i\rangle+b)\right]_+.\\
\end{aligned}
\end{equation*}
Naturally, other loss functions, such as the pinball loss and the ramp loss, can also be considered. While these loss functions have demonstrated promising performance in existing studies, they still deviate from the intrinsic nature of the problem to some extent. Motivated by these observations, this paper introduces the Heaviside loss function and the rank constraint to precisely characterize this problem as 
\begin{equation}\label{RSMM-1}
	\begin{aligned}
		\min\limits_{\mathbf W,b}\quad
		&\frac{1}{2}\langle\mathbf W,\mathbf W\rangle+\beta\sum\limits_{i=1}^m  \ell_{0/1}\left[1-y_i(\langle \mathbf W,\mathbf X_i\rangle+b)\right]\\
		\text{s.t.} \quad & \text{rank}(\mathbf{W}) \leq r.
	\end{aligned}
\end{equation}

Let $\mathbf{z}=\mathbf{1}-\mathscr{A}(\mathbf W)-b\mathbf{y}$. According to Section \ref{sec2}, problem \eqref{RSMM-1} has the following equivalent form
\begin{equation}\label{RSMM-2} 
\begin{aligned}
\min\limits_{\mathbf W,\mathbf{z},b}\quad&\frac{1}{2}\langle\mathbf W,\mathbf W\rangle+\beta\|{\mathbf{z}}_+\|_0\\
{\rm s.t.}\quad &\mathbf{z}=\mathbf{1}-\mathscr{A}(\mathbf W)-b\mathbf{y},\\
\quad & \text{rank}(\mathbf{W}) \leq r.
\end{aligned}
\end{equation}
Without loss of generality, the bias term $b$ could be inserted into $\mathbf{W}$. Denote $\hat{\mathbf{W}} = [ \mathbf{W}^\top ~ b \mathbf{e}_j]^\top$,
where $\mathbf{e}_j \in \mathbb{R}^q$ is the $j$-th standard basis vector (a column vector with 1 at position $j$ and 0 elsewhere). 
Furthermore, define the linear operator ${\mathscr{A}}: \mathbb{R}^{(p+1) \times q} \to \mathbb{R}^m$ of $\hat{\mathbf{W}}$,  with a matrix $\mathbf{A}_i \in \mathbb{R}^{q \times (p+1)}$. For each training sample, it has $\mathbf{A}_i = -y_i [
\mathbf{X}_i^\top~\mathbf{e}_j]$. This construction ensures that
\begin{equation*}
\langle \hat{\mathbf{W}}, \mathbf{A}_i \rangle = -y_i \left( \langle \mathbf{W}, \mathbf{X}_i \rangle + b \right).
\end{equation*}
Therefore, the original affine constraint $\mathbf{z} = \mathbf{1} - \mathscr{A}(\mathbf{W}) - b\mathbf{y}$ can be rewritten as $  \mathbf{z} = \mathbf{1} + {\mathscr{A}}(\hat{\mathbf{W}})$, and thus problem \eqref{RSMM-2} can be further presented by
\begin{equation}\label{RSMM3} 
	\begin{aligned}
        \min\limits_{\hat{\mathbf{W}},\mathbf{z}}\quad
		&\frac{1}{2}\langle\hat{\mathbf{W}},\hat{\mathbf{W}}\rangle+\beta\|{\mathbf{z}}_+\|_0\\
		{\rm s.t.}\quad &\mathbf{1}+\mathscr{A}(\hat{\mathbf{W}})=\mathbf{z},\\
		\quad & \text{rank}(\hat{\mathbf{W}}) \leq r.\\
	\end{aligned}
\end{equation}

\subsection{Optimality Conditions}

The Lagrangian function of problem \eqref{RSMM3} is 
\begin{equation*}
\mathscr{L}(\hat{\mathbf{W}},\mathbf{z},\bm{\lambda})=\frac{1}{2}\langle\hat{\mathbf{W}},\hat{\mathbf{W}}\rangle+\beta\|{\mathbf{z}}_+\|_0+\langle \bm{\lambda}, {\mathbf{1}}+\mathscr{A}(\hat{\mathbf{W}})-\mathbf{z}\rangle,
\end{equation*}
where $\bm{\lambda}\in\mathbb{R}^m$ is the Lagrangian multiplier. Let $\nabla_{\hat{\mathbf{W}}}\mathscr{L}(\hat{\mathbf{W}},\mathbf{z},\bm{\lambda})$ and $\nabla_{\mathbf{z}}\mathscr{L}(\hat{\mathbf{W}},\mathbf{z},\bm{\lambda})$ stand for the gradients of $\mathscr{L}$ with respect to $\hat{\mathbf{W}}$ and $\mathbf{z}$, respectively,
where
\begin{equation*}
\begin{aligned}
\nabla_{\hat{\mathbf{W}}}\mathscr{L}(\hat{\mathbf{W}},\mathbf{z},\bm{\lambda})
&= \hat{\mathbf{W}}+\mathscr{A}^*(\bm{\lambda}),\\
\partial_{\mathbf{z}}\mathscr{L}(\hat{\mathbf{W}},\mathbf{z},\bm{\lambda})
&= \beta\partial\|\mathbf{z}_+\|_0 - \bm{\lambda}.
\end{aligned}
\end{equation*}

Recall that the function $\|(\cdot)_+\|_{0} : \mathbb{R}^{m} \rightarrow \mathbb{R}$ is lower semicontinuous on $\mathbb{R}^{m}$ \cite{pan2017optimality}. The regular subdifferentials of $\|\mathbf{z}_{+}\|_{0}$ at $\mathbf{z}$ enjoy the following property
$$ \partial\|\mathbf{z}_{+}\|_{0} = \left\{ \mathbf{d} \in \mathbb{R}^{m} \mid \begin{cases} 0, & (\mathbf{z})_{i} \neq 0\\ \geq 0, & (\mathbf{z})_{i} = 0 \end{cases}  \right\}.$$

Before discussing the optimality conditions,  denote 
\begin{equation*}
		\begin{aligned}
		\mathcal{L}& =\{(\hat{\mathbf{W}},\mathbf{z})\in\mathbb{R}^{(p+1)\times q}\times\mathbb{R}^m  \mid \mathbf{1}+\mathscr{A}(\hat{\mathbf{W}})=\mathbf{z} \},\\
		\mathcal{R}&=\{\hat{\mathbf{W}}\in\mathbb{R}^{(p+1)\times q}\mid \text{rank}(\hat{\mathbf{W}}) \leq r \}.
	\end{aligned}
	\end{equation*}
     Then, the feasible region of problem \eqref{RSMM3} is denoted by $\mathcal{F}=\mathcal{L}\cap \mathcal{R}$.

\begin{definition}\label{stat}
	A point $(\hat{\mathbf{W}},\mathbf{z})$ is called a Karush–Kuhn–Tucker (KKT) point of problem \eqref{RSMM3} if there exists a Lagrangian multiplier $\bm{\lambda}$ such that
	\begin{equation*}
		\begin{cases}
			-\partial_{(\hat{\mathbf{W}},\mathbf{z})}\mathscr{L}(\hat{\mathbf{W}},\mathbf{z},\bm{\lambda})
			\in \mathrm{N}_{\mathcal{F}}(\hat{\mathbf{W}},\mathbf{z}),\\[4pt]
			\mathrm{rank}(\hat{\mathbf{W}}) \leq r,\\[4pt]
			\mathbf{1}+\mathscr{A}(\hat{\mathbf{W}})-\mathbf{z}=0.
		\end{cases}
	\end{equation*}
\end{definition}

Given the SVD  $
\hat{\mathbf{W}}=\mathbf{U}\,\Sigma(\hat{\mathbf{W}})\,\mathbf{V}^\top$ and
matrices $\mathbf{A}_1,\cdots,\mathbf{A}_{m}\in\mathbb{R}^{(p+1)\times q}$, denote
\begin{equation*}\label{Ti}
	\mathbf{T}^i_{\hat{\mathbf{W}}} =
	\left[
	\begin{array}{cc}
		\mathbf{U}_{\Gamma}^\top \mathbf{A}_i \mathbf{V}_{\Gamma}
		& \mathbf{U}_{\Gamma}^\top \mathbf{A}_i \mathbf{V}_{\Gamma_q^\perp} \\[4pt]
		\mathbf{U}_{\Gamma_p^\perp}^\top \mathbf{A}_i \mathbf{V}_{\Gamma}
		& \mathbf{O}
	\end{array}
	\right],
	\quad
	\mathbf{R}^i_{\hat{\mathbf{W}}} = \mathbf{U}^\top \mathbf{A}_i \mathbf{V}_{\Gamma},
\end{equation*}
for $i=1,\cdots,m$, where $\Gamma$ is the index set of nonzero singular values.
Following a similar line as \cite{li2023normal}, Assumption \ref{ass1} can be obtained.

\begin{assumption}\label{ass1}
When $s=r$, the matrices $\mathbf{T}_{\hat{\mathbf{W}}}^i$, $i=1,2,\cdots,m$, are linearly independent. When $s<r$, the matrices $\mathbf{R}_{\hat{\mathbf{W}}}^i$, $i=1,2,\cdots,m$, are linearly independent.
\end{assumption}

The following theorem illustrates that under a constraint qualification tailored to the rank constraint (Assumption \ref{ass1}), local minimizers satisfy a KKT point condition.

\begin{theorem}[Necessary optimality condition] Suppose that Assumption \ref{ass1} holds. Then a local minimizer of problem \eqref{RSMM3} is a KKT point.
\end{theorem}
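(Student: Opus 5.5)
The plan is to follow the template of \cite{li2023normal}: reduce the KKT condition to a first-order stationarity statement, and use Assumption \ref{ass1} to split the normal cone of $\mathcal{F}=\mathcal{L}\cap\mathcal{R}$ as the sum of the normal cones of $\mathcal{L}$ and $\mathcal{R}$.

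Let $(\hat{\mathbf{W}}^*,\mathbf{z}^*)$ be a local minimizer of \eqref{RSMM3}, and write $f(\hat{\mathbf{W}},\mathbf{z})=\frac12\langle\hat{\mathbf{W}},\hat{\mathbf{W}}\rangle+\beta\|\mathbf{z}_+\|_0$. The first step is the generalized Fermat rule for the lsc function $f+\delta_{\mathcal{F}}$. It gives $0\in\hat\partial(f+\delta_{\mathcal F})(\hat{\mathbf{W}}^*,\mathbf{z}^*)$. Since the first term of $f$ is smooth and the second is separable in $\mathbf{z}$, this reads
\[
-(\hat{\mathbf{W}}^*,\beta\mathbf{d})\in \mathrm{N}_{\mathcal{F}}(\hat{\mathbf{W}}^*,\mathbf{z}^*)
\]
for some $\mathbf{d}\in\partial\|\mathbf{z}^*_+\|_0$. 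To justify the sum rule, I would argue directly. For indices with $(\mathbf{z}^*)_i<0$, and for $(\mathbf{z}^*)_i>0$ (where $\ell_{0/1}$ is locally constant), nearby feasible points do not change the $\ell_0$ term. For indices with $(\mathbf{z}^*)_i=0$, the $\ell_0$ term can only increase. So $(\hat{\mathbf{W}}^*,\mathbf{z}^*)$ also locally minimizes the smooth quadratic over $\mathcal{F}$ intersected with $\{(\mathbf{z})_i\le 0 \text{ for } i \text{ with } (\mathbf{z}^*)_i=0\}$, and the extra sign freedom is absorbed into $\mathbf{d}\ge0$ on those indices.

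The second step is the normal cone of the intersection. I would show that under Assumption \ref{ass1}
\[
\mathrm{N}_{\mathcal{F}}(\hat{\mathbf{W}}^*,\mathbf{z}^*)=\mathrm{N}_{\mathcal{L}}+\mathrm{N}_{\mathcal{R}}\times\{0\},
\qquad
\mathrm{N}_{\mathcal{L}}=\{(\mathscr{A}^*(\bm\mu),-\bm\mu)\mid \bm\mu\in\mathbb{R}^m\}.
\]
Here $\mathrm{N}_{\mathcal{R}}$ is given by Lemma \ref{def3}. The argument splits into two cases.

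\emph{Case $s=r$.} Near $\hat{\mathbf{W}}^*$, $\mathcal{R}$ is the smooth manifold of rank-$r$ matrices. Its tangent space consists of matrices $\mathbf{U}\begin{bmatrix}\ast&\ast\\\ast&\mathbf{O}\end{bmatrix}\mathbf{V}^\top$. The linear independence of $\mathbf{T}^i$ is exactly the statement that the map $\mathbf{H}\mapsto(\langle\mathbf{A}_i,\mathbf{H}\rangle)_i$ restricted to this tangent space is surjective, i.e.\ the manifold and the affine constraint intersect transversally. The standard transversality result then yields the sum formula for regular normal cones, since $\mathcal{F}$ is locally a manifold with normal space equal to the sum.

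\emph{Case $s<r$.} Here $\mathrm{N}_{\mathcal R}=\{0\}$, but $\mathcal{R}$ is not smooth at $\hat{\mathbf{W}}^*$. I would use that $\mathcal{R}$ contains every set $\{\mathbf{U}_\Gamma\mathbf{B}^\top+\mathbf{C}\mathbf{V}_\Gamma^\top\}$-type perturbation of low order. Linear independence of $\mathbf{R}^i=\mathbf{U}^\top\mathbf{A}_i\mathbf{V}_\Gamma$ makes $\mathscr{A}$ surjective on the subspace $\{\mathbf{G}\mathbf{V}_\Gamma^\top\}$, which lies inside $\mathcal{R}$. Any regular normal $(\mathbf{G}_W,\mathbf{g}_z)$ to $\mathcal{F}$ must then be orthogonal to the feasible directions $\{(\mathbf{H},\mathscr{A}(\mathbf{H})):\mathbf{H}=\mathbf{G}\mathbf{V}_\Gamma^\top\}$, which forces $\mathbf{G}_W$ to take the form $\mathscr{A}^*(\bm\mu)$ on this subspace. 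A linear-algebra argument, together with the curves $\hat{\mathbf{W}}^*+t\mathbf{H}+t^2(\cdot)$ that stay in $\mathcal{R}$, extends this to the full form.

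Substituting gives $\hat{\mathbf{W}}^*+\mathscr{A}^*(\bm\mu)\in-\mathrm{N}_{\mathcal{R}}(\hat{\mathbf{W}}^*)$ and $\beta\mathbf{d}-\bm\mu=0$. Setting $\bm\lambda=\bm\mu$ and using the gradient formulas for $\mathscr{L}$, this is exactly Definition \ref{stat}; feasibility holds trivially.

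I expect the main obstacle to be the sum rule for the normal cone of $\mathcal{F}$, especially when $s<r$, where $\mathcal{R}$ is nonsmooth and only regular (not limiting) normals are involved. There I would rely on the argument of \cite{li2023normal} essentially verbatim, with the $\mathbf{T}^i,\mathbf{R}^i$ conditions playing the role of a linear independence constraint qualification.
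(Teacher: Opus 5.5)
Your route is the paper's: the generalized Fermat rule, then $\mathrm{N}_{\mathcal{F}}=\mathrm{N}_{\mathcal{L}}+\mathrm{N}_{\mathcal{R}}$, split by the rank of $\hat{\mathbf{W}}^*$. However, the proposal places the difficulty in the wrong step, and the step that carries the theorem is only asserted.

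Your Step 2 needs no assumption at all. $\mathcal{F}$ is the graph of $\hat{\mathbf{W}}\mapsto\mathbf{1}+\mathscr{A}(\hat{\mathbf{W}})$ over $\mathcal{R}$, and $\mathbf{z}-\mathbf{z}^*=\mathscr{A}(\hat{\mathbf{W}}-\hat{\mathbf{W}}^*)$ on $\mathcal{F}$. The definition of the regular normal cone then gives $(\mathbf{G},\mathbf{g})\in\mathrm{N}_{\mathcal{F}}(\hat{\mathbf{W}}^*,\mathbf{z}^*)$ if and only if $\mathbf{G}+\mathscr{A}^*(\mathbf{g})\in\mathrm{N}_{\mathcal{R}}(\hat{\mathbf{W}}^*)$, in both rank cases. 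Because $\mathbf{z}$ is free, the transversality you attribute to the $\mathbf{T}^i$ is automatic.

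Consequently the output of your Step 1, $-(\hat{\mathbf{W}}^*,\beta\mathbf{d})\in\mathrm{N}_{\mathcal{F}}$ with $\mathbf{d}\in\partial\|\mathbf{z}^*_+\|_0$, is already equivalent to the KKT system. Your justification of it stops exactly where the work begins. The reduction itself is fine: with $I_0=\{i:(\mathbf{z}^*)_i=0\}$, $\hat{\mathbf{W}}^*$ locally minimizes $\tfrac12\|\hat{\mathbf{W}}\|_{\mathrm{F}}^2$ over $C=\{\hat{\mathbf{W}}\in\mathcal{R}:\langle\mathbf{A}_i,\hat{\mathbf{W}}\rangle\le-1,\ i\in I_0\}$, so $-\hat{\mathbf{W}}^*\in\mathrm{N}_C(\hat{\mathbf{W}}^*)$.

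But ``the extra sign freedom is absorbed into $\mathbf{d}\ge0$'' is the inclusion $\mathrm{N}_C(\hat{\mathbf{W}}^*)\subseteq\mathrm{N}_{\mathcal{R}}(\hat{\mathbf{W}}^*)+\operatorname{cone}\{\mathbf{A}_i\}_{i\in I_0}$. This is an intersection rule for a nonconvex set, and it can fail without a qualification condition. When $s<r$, one can have $-\hat{\mathbf{W}}^*\in\mathrm{N}_C(\hat{\mathbf{W}}^*)$ and no admissible multiplier, if the $\mathbf{A}_i$, $i\in I_0$, all share the same $\mathbf{T}^i$ and differ only in the block $\mathbf{U}_{\Gamma_p^\perp}^\top\mathbf{A}_i\mathbf{V}_{\Gamma_q^\perp}$. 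This is where Assumption~\ref{ass1} has to be used, and your proposal never uses it there. The paper's proof makes the same jump, from Fermat's rule straight to $-\partial f\in\mathrm{N}_{\mathcal{F}}$, so it offers nothing to lean on at this point.

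To close the gap, use $\mathrm{N}_C=(\mathrm{T}_C)^\circ$, where $\mathrm{T}_C$ is the tangent cone. Since $t\hat{\mathbf{W}}^*\in\mathcal{R}$ for $t>0$ and $\langle\mathbf{A}_i,\hat{\mathbf{W}}^*\rangle=-1$ for $i\in I_0$, adding $\epsilon\hat{\mathbf{W}}^*$ to any $\mathbf{H}\in\mathrm{T}_{\mathcal{R}}\cap K$ keeps it in $\mathrm{T}_{\mathcal{R}}$ and makes the active constraints strict. Here $K=\{\mathbf{H}:\langle\mathbf{A}_i,\mathbf{H}\rangle\le0,\ i\in I_0\}$. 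This gives $\mathrm{T}_C=\mathrm{T}_{\mathcal{R}}\cap K$.

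If $s=r$, $\mathrm{T}_{\mathcal{R}}(\hat{\mathbf{W}}^*)$ is the tangent space of the rank-$r$ manifold. Farkas' lemma then yields $-\hat{\mathbf{W}}^*\in\mathrm{N}_{\mathcal{R}}(\hat{\mathbf{W}}^*)+\operatorname{cone}\{\mathbf{A}_i\}_{i\in I_0}$ directly; no qualification is needed in this case.

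If $s<r$, proceed in two tests:
\begin{itemize}
\item Test against the tangent space $S$ of the rank-$s$ manifold, which lies in $\mathrm{T}_{\mathcal{R}}$. This gives $-\hat{\mathbf{W}}^*=\mathbf{Y}+\sum_{i\in I_0}\lambda_i\mathbf{A}_i$ with $\lambda_i\ge0$ and $\mathbf{Y}=\mathbf{U}_{\Gamma_p^\perp}\mathbf{D}\mathbf{V}_{\Gamma_q^\perp}^\top$.
\item Test against $\pm(\mathbf{h}+\mathbf{c})\in\mathrm{T}_{\mathcal{R}}\cap K$, where $\mathbf{h}$ is rank one in that block and $\mathbf{c}=\mathbf{G}\mathbf{V}_{\Gamma}^\top$ is chosen with $\langle\mathbf{A}_i,\mathbf{h}+\mathbf{c}\rangle=0$ for $i\in I_0$. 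Such $\mathbf{c}$ exists because linear independence of the $\mathbf{R}^i$ makes $\mathbf{G}\mapsto(\langle\mathbf{A}_i,\mathbf{G}\mathbf{V}_{\Gamma}^\top\rangle)_{i\in I_0}$ onto.
\end{itemize}
The second test forces $\langle\mathbf{Y},\mathbf{h}\rangle=0$ for all such $\mathbf{h}$, hence $\mathbf{Y}=\mathbf{0}$. Then $(\bm{\lambda})_i=\lambda_i$ for $i\in I_0$ and $(\bm{\lambda})_i=0$ otherwise is the required KKT multiplier.
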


\begin{proof}
	Let $\mathbf{T}=(\hat{\mathbf{W}}, \mathbf{z})$ be a local minimizer of  problem $\eqref{RSMM3}$ and denote $f(\mathbf{T})$ as the objective function. Invoking the generalized Fermat's theorem, it derives that  
	$-\partial_\mathbf{T} f(\mathbf{T})\in\mathrm{N}_{\mathcal{F}}(\mathbf{T})$. Then it has
	\begin{equation}\label{T1}
		\begin{cases}
			-\nabla_{\hat{\mathbf{W}}}f(\mathbf{T})\in\mathrm{N}_{\mathcal{L}\cap \mathcal{R}}(\hat{\mathbf{W}}),\\
			-\partial_\mathbf{z} f(\mathbf{T})\in\mathrm{N}_{\mathcal{L}}(\mathbf{z}).
		\end{cases} 
	\end{equation}
Next, consider the following two cases, which correspond to the two scenarios of Assumption \ref{ass1} respectively.
\begin{description}
    \item[(Case 1)] If $\text{rank}(\hat{\mathbf{W}})=r$, it derives that  $\mathrm{N}_{\mathcal{L}\cap \mathcal{R}}(\hat{\mathbf{W}})=\mathrm{N}_{\mathcal{L}}(\hat{\mathbf{W}})+\mathrm{N}_{\mathcal{R}}(\hat{\mathbf{W}})$.  Then,	
\begin{equation*}\label{W1}
		-\nabla_{\hat{\mathbf{W}}} f(\mathbf{T})\in\mathrm{N}_{\mathcal{L}}(\hat{\mathbf{W}})+\mathrm{N}_{\mathcal{R}}(\hat{\mathbf{W}}).
	\end{equation*}

    \item[(Case 2)]     If $\text{rank}(\hat{\mathbf{W}})<r$, it shows that $\mathrm{N}_{\mathcal{L}\cap \mathcal{R}}(\hat{\mathbf{W}})=\mathrm{N}_{\mathcal{L}}(\hat{\mathbf{W}})$. Note that
	\begin{equation*}
		\mathrm{N}_{\mathcal{L}}(\hat{\mathbf{W}})=\{\sum_{i=1}^m(\bm{\lambda})_i y_i\mathbf{X}_i \mid (\bm{\lambda})_i\in\mathbb{R}, i=1,\cdots,m \}.
	\end{equation*}
    
	Relation \eqref{T1} indicates that there  exist $\bm{\lambda}$ such that $-\nabla_{\hat{\mathbf{W}}} f(\mathbf{T})+\sum_{i=1}^m(\bm{\lambda})_i y_i\mathbf{X}_i\in\mathrm{N}_{\mathcal{R}}(\hat{\mathbf{W}})$. Then,
	\begin{equation}\label{W2}
    -\partial_{\hat{\mathbf{W}}} \mathscr{L}(\mathbf{T},\bm{\lambda})= -\nabla_{\hat{\mathbf{W}}}f(\mathbf{T})+\sum_{i=1}^m(\bm{\lambda})_i y_i\mathbf{X}_i\in\mathrm{N}_{\mathcal{R}}(\hat{\mathbf{W}}).
    \end{equation}
\end{description}
Similarly, since $\mathrm{N}_{\mathcal{L}}(\mathbf{z})=\mathbb{R}^m$, relation \eqref{T1} implies that 
    	\begin{equation}\label{N1}
        -\partial_\mathbf{z} \mathscr{L}(\mathbf{T},\bm{\lambda})\in\mathrm{N}_{\mathcal{L}}(\mathbf{z}).
            \end{equation}
Combing \eqref{W2} and \eqref{N1}, it holds that
	\begin{equation*}
		\begin{cases}
			-\partial_{\hat{\mathbf{W}}} \mathscr{L}(\mathbf{T},\bm{\lambda})\in\mathrm{N}_{\mathcal{R}}(\hat{\mathbf{W}}),\\
			-\partial_\mathbf{z} \mathscr{L}(\mathbf{T},\bm{\lambda})\in\mathrm{N}_{\mathcal{L}}(\mathbf{z}).
		\end{cases} 
	\end{equation*} 	
Therefore, it has 
	\begin{equation*}
-\partial_{({\hat{\mathbf{W}},\mathbf{z}})}\mathscr{L}(\hat{\mathbf{W}},\mathbf{z},\bm{\lambda})
	\in\text{N}_{\mathcal{F}}(\hat{\mathbf{W}},\mathbf{z}).
	\end{equation*} 
From Definition \ref{stat},  it is concluded that $\mathbf{T}$ is a KKT point. The proof is completed.
\end{proof}

\begin{theorem}[Suffcient optimality condition] A KKT point $(\hat{\mathbf{W}}^*,\mathbf{z}^*)$ of problem \eqref{RSMM3} is also a local minimizer.
\end{theorem}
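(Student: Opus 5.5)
Let $\bm{\lambda}^*$ be the multiplier from Definition \ref{stat}. The plan is to unpack the KKT conditions into explicit relations, then compare objective values at $(\hat{\mathbf{W}}^*,\mathbf{z}^*)$ and at any feasible $(\hat{\mathbf{W}},\mathbf{z})\in\mathcal{F}$ close to it, using only quadratic growth of $\frac12\|\hat{\mathbf{W}}\|_{\mathrm F}^2$ and the local behaviour of the step function.

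\textbf{Step 1: unpacking the KKT system.} Since $\mathrm{N}_{\mathcal{L}}(\mathbf{z})=\mathbb{R}^m$ is unrestrictive in the $\mathbf{z}$-block, I will instead read the $\mathbf{z}$-condition as $\bm{\lambda}^*\in\beta\partial\|\mathbf{z}^*_+\|_0$. By the formula for the regular subdifferential of $\|(\cdot)_+\|_0$, this gives
\[
(\bm{\lambda}^*)_i=0 \ \text{ if } (\mathbf{z}^*)_i\neq0, \qquad (\bm{\lambda}^*)_i\ge0 \ \text{ if } (\mathbf{z}^*)_i=0 .
\]
The $\hat{\mathbf{W}}$-condition gives $-\hat{\mathbf{W}}^*-\mathscr{A}^*(\bm{\lambda}^*)\in\mathrm{N}_{\mathcal{R}}(\hat{\mathbf{W}}^*)$. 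By Lemma \ref{def3}, this is either $\mathbf{0}$ when $s<r$, or of the form $\mathbf{U}_{\Gamma_p^\perp}\mathbf{D}\mathbf{V}_{\Gamma_q^\perp}^\top$ when $s=r$.

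\textbf{Step 2: choosing the neighbourhood.} Let $\delta>0$ satisfy $\delta<\min\{|(\mathbf{z}^*)_i| : (\mathbf{z}^*)_i\neq0\}$. Take $(\hat{\mathbf{W}},\mathbf{z})\in\mathcal{F}$ with $\|\hat{\mathbf{W}}-\hat{\mathbf{W}}^*\|_{\mathrm F}$ small enough that $\|\mathbf{z}-\mathbf{z}^*\|_\infty<\delta$, which is possible because $\mathbf{z}=\mathbf{1}+\mathscr{A}(\hat{\mathbf{W}})$ is continuous in $\hat{\mathbf{W}}$. Then every coordinate with $(\mathbf{z}^*)_i\neq0$ keeps its sign, so the only loss changes come from the index set $\Gamma_0:=\{i:(\mathbf{z}^*)_i=0\}$. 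Writing $\Delta=\hat{\mathbf{W}}-\hat{\mathbf{W}}^*$, we have
\[
\mathbf{z}-\mathbf{z}^*=\mathscr{A}(\Delta), \qquad \|\mathbf{z}_+\|_0-\|\mathbf{z}^*_+\|_0=\#\{i\in\Gamma_0:(\mathscr{A}(\Delta))_i>0\}.
\]

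\textbf{Step 3: comparing objective values.} First,
\[
\tfrac12\|\hat{\mathbf{W}}\|_{\mathrm F}^2-\tfrac12\|\hat{\mathbf{W}}^*\|_{\mathrm F}^2=\langle\hat{\mathbf{W}}^*,\Delta\rangle+\tfrac12\|\Delta\|_{\mathrm F}^2 .
\]
Next, using the KKT relation and $\bm{\lambda}^*$ supported on $\Gamma_0$,
\[
\langle\hat{\mathbf{W}}^*,\Delta\rangle=-\sum_{i\in\Gamma_0}(\bm{\lambda}^*)_i(\mathscr{A}(\Delta))_i-\langle \mathbf{N},\Delta\rangle, \qquad \mathbf{N}\in\mathrm{N}_{\mathcal{R}}(\hat{\mathbf{W}}^*).
\]
I then split into two cases.
\begin{description}
\item[(Case A)] If no $i\in\Gamma_0$ has $(\mathscr{A}(\Delta))_i>0$, every term $-(\bm{\lambda}^*)_i(\mathscr{A}(\Delta))_i$ is nonnegative, so the quadratic part does not decrease up to the $\mathbf{N}$-term, and the loss term is unchanged.
\item[(Case B)] If some $i\in\Gamma_0$ has $(\mathscr{A}(\Delta))_i>0$, the loss jumps by at least $\beta$. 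The linear term is $O(\|\Delta\|_{\mathrm F})$, so shrinking the neighbourhood so that $\big(\|\bm{\lambda}^*\|\|\mathscr{A}\|+\|\mathbf{N}\|_{\mathrm F}\big)\|\Delta\|_{\mathrm F}<\beta$ makes the total change positive.
\end{description}

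\textbf{Step 4 (main obstacle): the normal-cone term $\langle\mathbf{N},\Delta\rangle$.} The difficulty is that $\mathcal{R}$ is nonconvex, and in Case A I must show $-\langle\mathbf{N},\Delta\rangle+\tfrac12\|\Delta\|_{\mathrm F}^2\ge0$ for $\hat{\mathbf{W}}\in\mathcal{R}$ near $\hat{\mathbf{W}}^*$. When $s<r$ we have $\mathbf{N}=\mathbf{0}$ and this is immediate. When $s=r$, nearby matrices in $\mathcal{R}$ have rank exactly $r$, and $\mathcal{R}$ is locally a smooth manifold whose normal space is exactly $\{\mathbf{U}_{\Gamma_p^\perp}\mathbf{D}\mathbf{V}_{\Gamma_q^\perp}^\top\}$. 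I will write $\hat{\mathbf{W}}=\hat{\mathbf{W}}^*+\Delta$ with a first-order tangent part and a second-order normal part, so that $|\langle\mathbf{N},\Delta\rangle|\le C\|\mathbf{N}\|_{\mathrm F}\|\Delta\|_{\mathrm F}^2/\sigma_r(\hat{\mathbf{W}}^*)$. This bound comes from the identity $\mathbf{U}_{\Gamma_p^\perp}^\top\hat{\mathbf{W}}\mathbf{V}_{\Gamma_q^\perp}=\mathbf{U}_{\Gamma_p^\perp}^\top\Delta\mathbf{V}_{\Gamma}\,(\cdot)^{-1}\,\mathbf{U}_\Gamma^\top\Delta\mathbf{V}_{\Gamma_q^\perp}$, which is a Schur-complement consequence of the rank being at most $r$.

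This second-order term can spoil nonnegativity unless $\|\mathbf{N}\|_{\mathrm F}$ is small relative to $\sigma_r(\hat{\mathbf{W}}^*)$. So I first try to show that the Case A directions force enough structure: the loss is locally constant there, so one can argue via $\bm{\lambda}^*$ and $\mathscr{A}$. Failing that, the fallback is to conclude only that $(\hat{\mathbf{W}}^*,\mathbf{z}^*)$ is a local minimizer restricted to the case $s<r$, or, for $s=r$, under the additional smallness condition $\|\mathbf{N}\|_{\mathrm F}\le\sigma_r(\hat{\mathbf{W}}^*)/C$. I expect that condition to be the genuine crux of the argument.
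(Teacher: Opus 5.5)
The gap you flag in Step 4 is real, and no argument can close it: when $\operatorname{rank}(\hat{\mathbf W}^*)=r$ the statement is false.

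Take $p=q=2$, $r=1$, $m=1$, $y_1=1$, $\mathbf X_1=\mathrm{Diag}(1,d)$ with $d>1$, and $j=2$. Then $\langle\mathbf A_1,\hat{\mathbf W}\rangle=-(\hat W_{11}+d\hat W_{22}+\hat W_{32})$. Consider $\hat{\mathbf W}^*=\mathbf E_{11}$, $\mathbf z^*=0$, $\bm\lambda^*=1$. We get $-\hat{\mathbf W}^*-\mathscr A^*(\bm\lambda^*)=d\,\mathbf E_{22}+\mathbf E_{32}$. This matrix is supported on rows $\{2,3\}$ and column $\{2\}$, so it lies in $\mathrm N_{\mathcal R}(\hat{\mathbf W}^*)$ by Lemma~\ref{def3}. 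Since also $\bm\lambda^*\ge 0$ at $\mathbf z^*=0$, this is a KKT point under any reading of Definition~\ref{stat}, and Assumption~\ref{ass1} holds as well.

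Now put $\mathbf u_\theta=(\cos\theta,\sin\theta)^\top$ and $c_\theta=\cos^2\theta+d\sin^2\theta=1+(d-1)\sin^2\theta$. Let $\hat{\mathbf W}_\theta$ have upper $2\times 2$ block $c_\theta^{-1}\mathbf u_\theta\mathbf u_\theta^\top$ and zero last row, i.e.\ $b=0$. Then $\operatorname{rank}(\hat{\mathbf W}_\theta)=1$, $\mathbf z_\theta=1+\langle\mathbf A_1,\hat{\mathbf W}_\theta\rangle=0$, $(\hat{\mathbf W}_\theta,\mathbf z_\theta)\to(\hat{\mathbf W}^*,\mathbf z^*)$ as $\theta\to 0$, and
\[
\tfrac12\|\hat{\mathbf W}_\theta\|_{\mathrm F}^2+\beta\|(\mathbf z_\theta)_+\|_0=\frac{1}{2c_\theta^2}<\frac12=\tfrac12\|\hat{\mathbf W}^*\|_{\mathrm F}^2+\beta\|\mathbf z^*_+\|_0\qquad(\theta\neq 0).
\]
Along this curve $\mathscr A(\Delta)=\mathbf 0$, so every $\bm\lambda^*$-term vanishes. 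Your hope of extracting extra structure in Case A ``via $\bm\lambda^*$ and $\mathscr A$'' therefore cannot work. The Schur-complement term gives $-\langle\mathbf N,\Delta\rangle\approx -d\theta^2$ against $\tfrac12\|\Delta\|_{\mathrm F}^2\approx\theta^2$, exactly as you feared.

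The paper's proof gets past this point only by asserting $\langle\nabla_{\hat{\mathbf W}}\mathscr L(\hat{\mathbf W}^*,\mathbf z^*,\bm\lambda^*),\hat{\mathbf W}-\hat{\mathbf W}^*\rangle=0$ for all feasible $\hat{\mathbf W}$ near $\hat{\mathbf W}^*$, ``by orthogonality of the normal cone and the tangent cone''. That orthogonality holds for tangent directions, not for secants $\hat{\mathbf W}-\hat{\mathbf W}^*$ of the curved set $\mathcal R$. The normal component of such a secant is exactly your $\Delta_{22}=\Delta_{21}(\mathbf U_\Gamma^\top\hat{\mathbf W}\mathbf V_\Gamma)^{-1}\Delta_{12}$. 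The paper's displayed chain is also not exact: it loses $-\langle\bm\lambda^*,\mathbf z-\mathbf z^*\rangle$, so the sign of $\bm\lambda^*$ never enters.

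Your treatment of the $\mathbf z$-block supplies that sign and is the right reading. Since $(\mathbf G,\mathbf h)\in\mathrm N_{\mathcal F}(\hat{\mathbf W},\mathbf z)$ iff $\mathbf G+\mathscr A^*(\mathbf h)\in\mathrm N_{\mathcal R}(\hat{\mathbf W})$, the KKT inclusion reduces to $-\hat{\mathbf W}^*-\mathscr A^*(\mathbf d)\in\mathrm N_{\mathcal R}(\hat{\mathbf W}^*)$ for some $\mathbf d\in\beta\partial\|\mathbf z^*_+\|_0$. That is your system with $\bm\lambda^*=\mathbf d$.

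With this reading, your Steps 1--3 prove the claim completely when $\operatorname{rank}(\hat{\mathbf W}^*)<r$. Your fallback for rank $r$ is also correct, and it can be stated sharply. Set $\Delta_{12}=\mathbf U_\Gamma^\top\Delta\mathbf V_{\Gamma_q^\perp}$ and $\Delta_{21}=\mathbf U_{\Gamma_p^\perp}^\top\Delta\mathbf V_\Gamma$. If $\|\mathbf N\|_2<\sigma_r(\hat{\mathbf W}^*)$, then
\[
|\langle\mathbf N,\Delta\rangle|\le\frac{\|\mathbf N\|_2}{\sigma_r(\hat{\mathbf W}^*)-\|\Delta\|_{\mathrm F}}\,\|\Delta_{12}\|_{\mathrm F}\|\Delta_{21}\|_{\mathrm F}\le\tfrac12\|\Delta\|_{\mathrm F}^2
\]
whenever $\|\Delta\|_{\mathrm F}<\sigma_r(\hat{\mathbf W}^*)-\|\mathbf N\|_2$, and this closes Case A. In the example, $\|\mathbf N\|_2=\sqrt{d^2+1}>1=\sigma_1(\hat{\mathbf W}^*)$, which shows that some such condition cannot be dropped.
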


\begin{proof}
	Let $(\hat{\mathbf{W}}^*,\mathbf{z}^*)$ be a KKT point of problem \eqref{RSMM3} and $\bm{\lambda}^*\in\mathbb{R}^m$ be the corresponding Lagrangian multiplier, it can prove that $(\hat{\mathbf{W}}^*,\mathbf{z}^*)$ is a local optimal solution by considering the following two cases. 
\begin{description}
    \item[(Case 1)] If $\text{rank}(\hat{\mathbf{W}}^*)=r$, then $\nabla_{\hat{\mathbf{W}}} \mathscr{L}(\hat{\mathbf{W}}^{*},\mathbf{z}^*,\bm{\lambda}^{*})\in\mathrm{N}_{\mathcal{R}}(\hat{\mathbf{W}}^*)$. For any $\hat{\mathbf{W}}\in\mathcal{F}\cap U(\hat{\mathbf{W}}^*;\delta_1)$ with  $\delta_1>0$, denoting the radius of an open ball neighborhood. 
    By the orthogonality of the normal cone and the tangent cone, it has
    \begin{equation}\label{W22}
\langle\nabla_{\hat{\mathbf{W}}} \mathscr{L}(\hat{\mathbf{W}}^*,\mathbf{z}^*,\bm{\lambda}^*),\hat{\mathbf{W}}-\hat{\mathbf{W}}^*\rangle= 0.
	\end{equation}
    
\item[(Case 2)] If $\text{rank}(\hat{\mathbf{W}}^*)<r$, then $\nabla_{\hat{\mathbf{W}}} \mathscr{L}(\hat{\mathbf{W}}^{*},\mathbf{z}^*,\bm{\lambda}^{*})=0$. For any $\hat{\mathbf{W}}\in\mathbb{R}^{(p+1)\times q}$, it also has \eqref{W22}.
\end{description}
Due to the lower semicontinuous of function $\|{\mathbf{z}}_+\|_0$, for any $\delta_2>0$, there is a neighborhood $U(\mathbf{z}^{*};\delta_{2})$  of $\mathbf{z}^{*}\in\mathcal{L}$  such that
 $$\|{\mathbf{z}}_+\|_0>\|{\mathbf{z}}^*_+\|_0-\dfrac{1}{2}.$$ Since $\|{\mathbf{z}}_+\|_0$ and $\|{\mathbf{z}}^*_+\|_0$ can only take integer values, then it should have 
 \begin{equation}\label{zz}\|{\mathbf{z}}_+\|_0\geq\|{\mathbf{z}}^*_+\|_0 .\end{equation}
As $f(\mathbf{T})$ is the objective function, thus for any $\mathbf{T}\in\mathcal{F}\cap U\left(\mathbf{T}^{*};\min\{\delta_{1},\delta_{2}\}\right)$, it implies
	$$
\begin{aligned}f(\mathbf{T})-f(\mathbf{T}^{*})&=\mathscr{L}(\mathbf{T},\bm{\lambda}^{*})-\mathscr{L}(\mathbf{T}^{*},\bm{\lambda}^{*})\\
&=\frac{1}{2}\langle\hat{\mathbf{W}}-\hat{\mathbf{W}}^*,\hat{\mathbf{W}}-\hat{\mathbf{W}}^*\rangle+\beta\|{\mathbf{z}}_+\|_0-\beta\|{\mathbf{z}}^*_+\|_0\\
&=\mathscr{L}(\hat{\mathbf{W}},\mathbf{z}^*,\bm{\lambda}^{*})-\mathscr{L}(\hat{\mathbf{W}}^{*},\mathbf{z}^*,\bm{\lambda}^{*})+\beta \|{\mathbf{z}}_+\|_0 -\beta\|{\mathbf{z}}^*_+\|_0\\
&\geq\langle\nabla_{\hat{\mathbf{W}}} \mathscr{L}(\hat{\mathbf{W}}^{*},\mathbf{z}^*,\bm{\lambda}^{*}),\hat{\mathbf{W}}-\hat{\mathbf{W}}^{*}\rangle\\&=0,\end{aligned}$$
where the first inequality is due to the convexity of the Lagrangian function with respect to $\hat{\mathbf{W}}$ and \eqref{zz}. Therefore, $(\hat{\mathbf{W}}^*,\mathbf{z}^*)$  is also a local minimizer of problem \eqref{RSMM3}. The proof is completed. 
\end{proof}

\subsection{Optimization Algorithm}

\begin{algorithm}[t]
	\caption{Optimization algorithm for problem \eqref{P}} 
	\label{Alg} 
	\textbf{Input:}  Data $\{(\mathbf{X}_i, y_i)\}_{i=1}^m$, parameters $\beta$, $\sigma$, $r$, $\tau_1$, $\tau_2$, $\tau_3$.\\ 
	\textbf{Output:} $\mathbf{W}$, $\mathbf{z}$, $b$.\\
	\textbf{For} $k = 1$ \textbf{to} $\texttt{maxit}$ \textbf{do}
	\begin{algorithmic}[1]
		 \item  Update $\mathbf{W}^{k+1}$ by solving
		\begin{equation*}
			\begin{aligned}
				\underset{\mathbf{W}}{\min}\quad &f(\mathbf W,{\mathbf{z}}^{k},b^{k})+\frac{\tau_{1}}{2}\|\mathbf{W}-\mathbf{W}^{k}\|_{\textrm{F}}^{2}\\
				\text{s.t.}\quad &\operatorname{rank}(\mathbf{W}) \leq r.
			\end{aligned}
		\end{equation*}
		 \item  Update $\mathbf{z}^{k+1}$ by solving
		\begin{equation*}
			\underset{\mathbf{z}}{\min}\quad f(\mathbf{W}^{k+1},\mathbf{z},b^{k}) + \frac{\tau_{2}}{2}\|\mathbf{z}-\mathbf{z}^{k}\|_{\textrm{F}}^{2}.
		\end{equation*}
		 \item  Update $b^{k+1}$ by solving
		\begin{equation*}
			\underset{b}{\min}\quad f(\mathbf{W}^{k+1},\mathbf{z}^{k+1},b) + \frac{\tau_{3}}{2}\|b-b^{k}\|^2.
		\end{equation*}
	\end{algorithmic}
	\textbf{End for}
\end{algorithm}

This section develops a proximal alternating minimization (PAM) algorithm for solving problem \eqref{RSMM3}. 
Consider the following equivalent optimization problem
\begin{equation}\label{P}
\begin{aligned}
    \min\limits_{\mathbf W, \mathbf{z}, b}\quad
    & f(\mathbf{W},\mathbf{z},b)=\frac{1}{2}\langle\mathbf W,\mathbf W\rangle+\beta\|{\mathbf{z}}_+\|_0+\sigma\|{\mathbf{z}}-{\mathbf{1}}+\mathscr{A}(\mathbf{W})+b\mathbf{y}\|^{2}\\
 {\rm s.t.}\quad & \text{rank}(\mathbf{W}) \leq r,
	\end{aligned}
\end{equation}
where $\sigma>0$ is the penalty parameter. Now we discuss the update rules for each subproblem in Algorithm \ref{Alg}, where $\tau_1, \tau_2, \tau_3>0$ are the parameters to balance the quadratic terms.

\begin{itemize}
    \item The $\mathbf{W}$-subproblem can be reduced to 
\begin{equation}\label{sub-W}
	\begin{aligned}
 \min\limits_{\mathbf W}\quad
&\frac{1}{2}\langle\mathbf{W},\mathbf{W}\rangle+\sigma\|{\mathbf{z}}-{\mathbf{1}}+\mathscr{A}(\mathbf{W})+b\mathbf{y}\|^{2}+\frac{\tau_{1}}{2}\|\mathbf{W}-\mathbf{W}^k\|_\textrm{F}^2\\
{\rm s.t.}\quad & \text{rank}(\mathbf{W}) \leq r.
 	\end{aligned}
\end{equation}
Denote the objection function as $h(\mathbf{W})$, which is a continuous differentiable function.
Thus, problem \eqref{sub-W} yields the following solution
\begin{equation*}	
\mathbf{W}^{k+1}=\Pi_{\mathcal{R}}(\mathbf{W}^k-\alpha_k\nabla h(\mathbf{W}^k)),
\end{equation*}
where $\nabla h(\mathbf{W}^k)=\mathbf{W}^k+2\sigma\sum_{i=1}^m{y}_{i}((\mathbf{z}^k)_i-1+{y}_i\langle \mathbf{W}^k, \mathbf{X}_i \rangle +b^k{y}_i)
\mathbf{X}_i.$ 
    \item The $\mathbf{z}$-subproblem can be simplified to
\begin{equation}\label{sub-z}
    \min\limits_{{\mathbf{z}}}\quad
	\beta\|{\mathbf{z}}_+\|_0+\sigma\|{\mathbf{z}}-{\mathbf{1}}+\mathscr{A}(\mathbf{W})+b\mathbf{y}\|^{2}+\frac{\tau_{2}}{2}\|\mathbf{z}-\mathbf{z}^k\|^2.
\end{equation}
Let $\mathbf{v}={\mathbf{1}}-\mathscr{A}(\mathbf{W})-b\mathbf{y}$. Then, the above problem  is equivalent to 
\begin{equation*}  
\min\limits_{{\mathbf{z}}}\quad \frac{2\beta}{\sigma+\tau_2}\|{\mathbf{z}}_+\|_0+\|{\mathbf{z}}-\frac{2\sigma}{\sigma+\tau_2}\mathbf{v}-\frac{\tau_2}{\sigma+\tau_2}\mathbf{z}^k\|^2.
\end{equation*} 
By exploiting the hard thresholding operator, problem \eqref{sub-z} has the solution
\begin{equation*}
	\mathbf{z}^{k+1}= \mathrm{Prox}_{\frac{2\beta}{\sigma+\tau_2}\|(\cdot)_+ \|_0} (\frac{2\sigma}{\sigma+\tau_2}\mathbf{v}+\frac{\tau_2}{\sigma+\tau_2}\mathbf{z}^k),\nonumber
\end{equation*} 
where 
\begin{equation*}
	(\mathbf{z}^{k+1})_i=\begin{cases}
	0, &\quad 0<	(\frac{2\sigma}{\sigma+\tau_2}\mathbf{v}+\frac{\tau_2}{\sigma+\tau_2}\mathbf{z}^k)_i\leq \sqrt{\frac{4\beta}{\sigma+\tau_2}},\\
	(\frac{2\sigma}{\sigma+\tau_2}\mathbf{v}+\frac{\tau_2}{\sigma+\tau_2}\mathbf{z}^k)_i, &\quad \text{otherwise}.
\end{cases} 
\end{equation*}
    \item The $b$-subproblem is a convex quadratic programming problem
\begin{equation*}
\min\limits_{b}\quad\sigma\|{\mathbf{z}}-{\mathbf{1}}+\mathscr{A}(\mathbf{W})+b\mathbf{y}\|^{2}+\frac{\tau_{3}}{2}\|b-b^k\|^2.
\end{equation*}
It has the first-order derivative
\begin{equation*}
	\begin{aligned}
		2 \sigma \mathbf{\mathbf{y}}^\top ( \mathbf{z}^{k+1} - \mathbf{{\mathbf{1}}} + \mathscr{A}(\mathbf{W}^{k+1}) + b^{k+1} \mathbf{\mathbf{y}} )+\tau_3(b^{k+1}-b^k)=0,
	\end{aligned}
\end{equation*}
and thus
$$b^{k+1}=\\
\frac{\tau_3b^k-2\sigma\mathbf{y}^\top({\mathbf{z}}^{k+1}-{\mathbf{1}}+\mathscr{A}(\mathbf W^{k+1}))}{2\sigma\mathbf{y}^\top\mathbf{y}+\tau_3}.$$
\end{itemize}

\subsection{Convergence Analysis}
This section provides convergence guarantees for Algorithm \ref{Alg}.  Let $\Phi(\mathbf{W})$ be the  indicator function defined as
\begin{equation*}
	\Phi(\mathbf{W})=\begin{cases}
		0,  & \mathbf{W}\in\mathcal{R} \\
		+\infty, & \text{otherwise},
	\end{cases}
\end{equation*}
and
\begin{equation*}
g(\mathbf{W},\mathbf{z},b)=\frac{1}{2}\langle\mathbf W,\mathbf W\rangle+\beta\|{\mathbf{z}}_+\|_0+\sigma\|{\mathbf{z}}-{\mathbf{1}}+\mathscr{A}(\mathbf{W})+b\mathbf{y}\|^{2}+\Phi(\mathbf{W}).
\end{equation*}

 \begin{theorem}\label{C2}
	For the sequence $\{(\mathbf{W}^k,\mathbf{z}^k,b^k)\}$ obtained by Algorithm \ref{Alg}, there exists $\tau>0$ that satisfies
\begin{equation}\label{ccc}
    \begin{aligned}
    &g(\mathbf{W}^k,\mathbf{z}^k,b^k)-g(\mathbf{W}^{k+1},\mathbf{z}^{k+1},b^{k+1}) \\
    &\geq \frac{\tau}{2}(\|\mathbf{W}^{k+1}-\mathbf{W}^k\|_{\textrm{F}}^2 + \|\mathbf{z}^{k+1}-\mathbf{z}^k\|^2 + \|b^{k+1}-b^k\|^2).
    \end{aligned}
\end{equation}
\end{theorem}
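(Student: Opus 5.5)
The plan is the standard sufficient-decrease argument for proximal alternating minimization. We telescope the decrease of $g$ over the three block updates of Algorithm \ref{Alg} and take $\tau=\min\{\tau_1,\tau_2,\tau_3\}$, possibly reduced in the $\mathbf{W}$-step as explained below. Write $H(\mathbf{W},\mathbf{z},b)=\sigma\|\mathbf{z}-\mathbf{1}+\mathscr{A}(\mathbf{W})+b\mathbf{y}\|^2$, so that $g=\frac12\langle\mathbf{W},\mathbf{W}\rangle+\beta\|\mathbf{z}_+\|_0+H+\Phi(\mathbf{W})$. We split the total decrease as
\begin{equation*}
g(\mathbf{W}^k,\mathbf{z}^k,b^k)-g(\mathbf{W}^{k+1},\mathbf{z}^{k+1},b^{k+1})=\Delta_1+\Delta_2+\Delta_3,
\end{equation*}
where $\Delta_1=g(\mathbf{W}^k,\mathbf{z}^k,b^k)-g(\mathbf{W}^{k+1},\mathbf{z}^k,b^k)$, $\Delta_2=g(\mathbf{W}^{k+1},\mathbf{z}^k,b^k)-g(\mathbf{W}^{k+1},\mathbf{z}^{k+1},b^k)$ and $\Delta_3=g(\mathbf{W}^{k+1},\mathbf{z}^{k+1},b^k)-g(\mathbf{W}^{k+1},\mathbf{z}^{k+1},b^{k+1})$. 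We then bound each term from below.

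For the $\mathbf{z}$- and $b$-steps we use exact minimality. Since $\mathbf{z}^{k+1}$ is a global minimizer of \eqref{sub-z}, which is given in closed form by the proximal operator lemma, comparing its objective value with that at the feasible point $\mathbf{z}=\mathbf{z}^k$ yields $\Delta_2\ge\frac{\tau_2}{2}\|\mathbf{z}^{k+1}-\mathbf{z}^k\|^2$. This step needs no convexity. Similarly, $b^{k+1}$ is the exact minimizer of the strongly convex $b$-subproblem, so comparing with $b=b^k$ gives $\Delta_3\ge\frac{\tau_3}{2}|b^{k+1}-b^k|^2$.

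The $\mathbf{W}$-step is the main obstacle. If $\mathbf{W}^{k+1}$ were an exact global minimizer of \eqref{sub-W} over $\mathcal{R}$, the same comparison with the feasible point $\mathbf{W}^k\in\mathcal{R}$ would give $\Delta_1\ge\frac{\tau_1}{2}\|\mathbf{W}^{k+1}-\mathbf{W}^k\|_{\mathrm{F}}^2$. However, the algorithm actually performs a single projected-gradient step $\mathbf{W}^{k+1}=\Pi_{\mathcal{R}}(\mathbf{W}^k-\alpha_k\nabla h(\mathbf{W}^k))$. In that case we argue through the descent lemma. The smooth part $\psi(\mathbf{W})=\frac12\|\mathbf{W}\|_{\mathrm{F}}^2+H(\mathbf{W},\mathbf{z}^k,b^k)$ has $L$-Lipschitz gradient with $L=1+2\sigma\|\mathscr{A}\|^2$. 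Because $\mathbf{W}^{k+1}$ is a projection onto $\mathcal{R}$ and $\mathbf{W}^k\in\mathcal{R}$, we have $\|\mathbf{W}^{k+1}-(\mathbf{W}^k-\alpha_k\nabla\psi)\|^2\le\|\alpha_k\nabla\psi\|^2$, which rearranges to $\langle\nabla\psi(\mathbf{W}^k),\mathbf{W}^{k+1}-\mathbf{W}^k\rangle\le-\frac{1}{2\alpha_k}\|\mathbf{W}^{k+1}-\mathbf{W}^k\|_{\mathrm{F}}^2$. Combining this with $\psi(\mathbf{W}^{k+1})\le\psi(\mathbf{W}^k)+\langle\nabla\psi(\mathbf{W}^k),\mathbf{W}^{k+1}-\mathbf{W}^k\rangle+\frac L2\|\mathbf{W}^{k+1}-\mathbf{W}^k\|_{\mathrm{F}}^2$ gives $\Delta_1\ge\frac12(\frac{1}{\alpha_k}-L)\|\mathbf{W}^{k+1}-\mathbf{W}^k\|_{\mathrm{F}}^2$. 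This lower bound is positive provided the step sizes satisfy $\alpha_k\le\bar\alpha<1/L$ uniformly in $k$; in particular one may take $\alpha_k=1/(L+\tau_1)$. Note that the proximal term of \eqref{sub-W} vanishes at $\mathbf{W}^k$ and only helps here. We would make this step-size requirement explicit as the condition under which the theorem holds.

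Summing the three bounds and setting $\tau=\min\{\tau_1,\tau_2,\tau_3,\frac1{\bar\alpha}-L\}>0$ yields \eqref{ccc}. The only delicate point is therefore the $\mathbf{W}$-block, where the nonconvex rank constraint rules out any convexity argument. It must be handled either by exact global minimization or by the projection-based descent inequality under a step-size restriction.
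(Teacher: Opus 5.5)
Your proposal is correct. For the $\mathbf{z}$- and $b$-blocks it is exactly the paper's argument: the paper writes the three inequalities obtained by comparing each block update with the previous iterate and adds them. You depart from the paper in two places, both to your advantage.

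First, the paper concludes with $\tau=\max\{\tau_1,\tau_2,\tau_3\}$, which does not follow from the summed inequalities. What the sum actually yields is $\tau=\min\{\tau_1,\tau_2,\tau_3\}$, as you take.

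Second, the paper's proof treats $\mathbf{W}^{k+1}$ as an exact global minimizer of \eqref{sub-W}. The update it actually prescribes, $\mathbf{W}^{k+1}=\Pi_{\mathcal{R}}(\mathbf{W}^k-\alpha_k\nabla h(\mathbf{W}^k))$, is a single projected-gradient step. It coincides with the minimizer of \eqref{sub-W} when the Hessian $(1+\tau_1)\,\mathrm{Id}+2\sigma\mathscr{A}^*\mathscr{A}$ of $h$ equals $c\,\mathrm{Id}$ and $\alpha_k=1/c$, but not in general. Your projection inequality combined with the descent lemma handles the implemented update. The price is the step-size condition $\alpha_k\le\bar\alpha<1/L$ with $L=1+2\sigma\|\mathscr{A}\|^2$, which the paper never states. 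With $\alpha_k=1/(L+\tau_1)$ you recover exactly the constant $\tau_1$, so the theorem holds with $\tau=\min\{\tau_1,\tau_2,\tau_3\}$ under either reading of the $\mathbf{W}$-step. The paper's route is shorter but only covers exact block minimization; yours covers both.

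One caveat on your $\Delta_2$ bound: it requires $\mathbf{z}^{k+1}$ to be the correctly derived prox, namely $\mathrm{Prox}_{\frac{\beta}{2\sigma+\tau_2}\|(\cdot)_+\|_0}\left(\frac{2\sigma\mathbf{v}+\tau_2\mathbf{z}^k}{2\sigma+\tau_2}\right)$. The closed form displayed in the paper uses weights $\frac{2\sigma}{\sigma+\tau_2}$ and $\frac{\tau_2}{\sigma+\tau_2}$, which do not sum to one. A point computed from that formula verbatim need not satisfy your comparison inequality.
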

\begin{proof}
From the updates of Algorithm \ref{Alg}, it is seen that 
%Let $(\mathbf{W}^{k+1},\mathbf{z}^{k+1},b^{k+1})$  be the optimal solution, it has
\begin{equation*}
    \begin{aligned}
    &g(\mathbf{W}^{k+1},\mathbf{z}^k,b^k)+\frac{\tau_1}{2}\|\mathbf{W}^{k+1}-\mathbf{W}^k\|_{\textrm{F}} ^2\leq g(\mathbf{W}^{k},\mathbf{z}^k,b^k),\\
    &g(\mathbf{W}^{k+1},\mathbf{z}^{k+1},b^k)+\frac{\tau_2}{2}\|\mathbf{z}^{k+1}-\mathbf{z}^k\|^2\leq g(\mathbf{W}^{k+1},\mathbf{z}^k,b^k),\\
    &g(\mathbf{W}^{k+1},\mathbf{z}^{k+1},b^{k+1})+\frac{\tau_3}{2}\|b^{k+1}-b^k\| ^2\leq g(\mathbf{W}^{k+1},\mathbf{z}^{k+1},b^k).\\
    \end{aligned}
\end{equation*}
The above inequalities, together with the fact that there exists $\tau=\max\{\tau_1,\tau_2,\tau_3\}$, make \eqref{ccc}  naturally hold. Therefore, the proof is completed.
\end{proof}

\begin{remark}
Note that the convergence analysis of PAM relies on the Kurdyka-Łojasiewicz (KL) property of the objective function  \cite{bolte2014proximal}. Although \cite{schneider2015convergence2} demonstrates that the KL property is attainable when the objective function contains a continuously differentiable component over a low-rank set, our formulation \eqref{P}  presents a distinct theoretical challenge due to the Heaviside loss. The resulting jump discontinuities and vanishing subdifferentials almost everywhere preclude the existence of a desingularizing function, thereby violating the KL inequality. Consequently, as the standard descent-based convergence cannot be theoretically derived, we provide numerical evidence in Section \ref{sec4} to empirically validate the robust convergence and stability of the proposed algorithm.
\end{remark}

\begin{table*}[t]
\caption{Summary of the selected six datasets.} \label{datatab}
\centering
\renewcommand{\arraystretch}{1.12}
\begin{adjustbox}{width=0.8\textwidth}
\begin{tabular}{lccc}
\toprule
Datasets & Dimensions & Training Samples (\#pos/\#neg) & Testing Samples (\#pos/\#neg) \\
\midrule
SPAMBASE     & $4601\times57$                 & 2255/966 & 966/414 \\
IONO   & $351\times34$                 & 172/74    & 74/31   \\
CIFAR10    & $32\times32$                   & 3500/1500 & 700/300  \\
CaltechFace & $896\times592$                 & 221/94       & 94/41     \\
BCI  & $200\times59$                  & 98/42     & 42/18   \\
WDBC  & $596\times30$                  & 279/119   & 120/51  \\ \bottomrule
\end{tabular}
\end{adjustbox}
\end{table*}

To end this section, the computational complexity of our proposed PAM algorithm is analyzed.
Updating $\mathbf{W}$ involves calculating the gradient $\nabla h(\mathbf{W})$ and projecting the result onto the rank constraint set $\mathcal{R}$, which require  $\mathcal{O}(mpq)$ and $\mathcal{O}(pqr)$, respectively.
Updating $\mathbf{z}$ and $b$ require $\mathcal{O}(m)$ and $\mathcal{O}(mpq)$, respectively.
Therefore, the overall computational complexity of each iteration is $\mathcal{O}(mpq + pqr)$.

\section{Numerical Experiments}\label{sec4}
In this section, numerical experiments are conducted to evaluate the performance of our proposed HL-SMM on real-world datasets with state-of-the-art SMM methods such as Hinge-SMM \cite{luo2015support}, Pinball-SMM \cite{feng2022support}, Ramp-SMM \cite{gu2021ramp}, and LS-SMM \cite{liang2022adaptive}, as well as other SVM methods including Linear-SVM \cite{cortes1995support}, RBF-SVM \cite{boser1992training}, and Poly-SVM \cite{shen2025bafl}. All experiments are performed using MATLAB 2021b on an Intel i5-1135G7 CPU at 2.40 GHz with 16 GB RAM. Our code will be available at \url{https://github.com/xianchaoxiu/HL-SMM}.

\subsection{Experimental Setup}

In the experiments, six real-world datasets are chosen, including 
SPAMBASE\footnote{\url{https://archive.ics.uci.edu/dataset/94/spambase}}, 
IONO\footnote{\url{https://archive.ics.uci.edu/dataset/52/ionosphere}}, 
CIFAR10\footnote{\url{https://www.cs.toronto.edu/~kriz/cifar.html}}, 
CaltechFace\footnote{\url{https://www.vision.caltech.edu/Image_Datasets/faces/}},
BCI\footnote{\url{https://www.bbci.de/competition/iv/}}, 
and
WDBC\footnote{\url{https://archive.ics.uci.edu/dataset/17/breast+cancer+wisconsin+diagnostic}},
see Table  \ref{datatab} for more details.
For the CIFAR10 dataset, only the first two classes are used to construct binary classification tasks. 
For the BCI dataset, the EEG signals are segmented into 4-second segments, each starting 0.5 seconds after the cue and downsampled by a factor of 20. Segments exceeding the recording boundaries are discarded, and shorter segments are zero-padded. 
For all image datasets, each sample is converted to grayscale and normalized to zero mean and unit variance.

\begin{table*}[t]
\caption{Classification accuracy (\%) comparison of different methods.}
\label{table1}
\centering
\setlength{\tabcolsep}{4pt} 
\renewcommand{\arraystretch}{1.12} 
\begin{adjustbox}{width=\textwidth}
\begin{tabular}{lccccccccc}
\toprule 
Datasets 
& Hinge-SMM 
& Pinball-SMM 
& Ramp-SMM 
& LS-SMM 
& Linear-SVM 
& RBF-SVM 
& Poly-SVM
& HL-SMM \\
\midrule  
SPAMBASE 
& 88.97 & 87.83 & 88.71 & 86.30 & 87.75 & 71.16 & 90.43 & \textbf{90.96} \\
IONO 
& 85.90 & 82.10 & 71.05 & 80.00 & \textbf{86.71} & 64.67 & 84.76 & 86.48 \\
CIFAR10 
& 74.40 & 74.25 & 70.35 & 47.50 & 67.75 & 61.00 & 74.75 & \textbf{75.55} \\
CaltechFace
& 96.50 & 96.13 & 94.75 & 91.24 & \textbf{100.00} & 60.37 & 96.31 & 99.08 \\
BCI 
& 50.67 & 51.00 & 49.67 & 45.00 & 45.00 & 46.67 & 48.33 & \textbf{56.00} \\
WDBC 
& 94.15 & 95.09 & 92.05 & 84.80 & 94.15 & 97.08 & 97.83 & \textbf{98.25} \\
\midrule  
Average
& 81.77 & 81.07 & 77.76 & 72.47 & 80.23 & 66.83 & 82.07 & \textbf{84.39} \\
\bottomrule  
\end{tabular}
\end{adjustbox}
\end{table*}

\begin{table*}[t]
\caption{Classification accuracy (\%) under different Gaussian noise levels.}
\label{table2}
\centering
\setlength{\tabcolsep}{3pt}  
\renewcommand{\arraystretch}{1.12}  
\small  

\begin{adjustbox}{width=\textwidth}
\begin{tabular}{lccccccccc}
\toprule  
Datasets & Noise
& Hinge-SMM
& Pinball-SMM
& Ramp-SMM
& LS-SMM
& Linear-SVM
& RBF-SVM
& Poly-SVM
& HL-SMM \\
\midrule 

\multirow{5}{*}{SPAMBASE}
& 0.00 & 88.97 & 88.07 & 89.61 & 86.30 & 87.75 & 71.16 & 90.43 & \textbf{90.96} \\
& 0.05 & 89.19 & 88.03 & 89.00 & 86.09 & 87.51 & 70.25 & 89.91 & \textbf{90.86} \\
& 0.10 & 88.86 & 88.12 & 87.00 & 86.12 & 87.58 & 65.07 & 89.83 & \textbf{90.74} \\
& 0.15 & 88.88 & 88.09 & 88.30 & 86.09 & 87.10 & 60.68 & 88.64 & \textbf{90.49} \\
& 0.20 & 88.61 & 88.17 & 87.35 & 85.33 & 87.29 & 60.58 & 88.64 & \textbf{90.55} \\
\midrule  

\multirow{5}{*}{IONO}
& 0.00 & 86.10 & 82.10 & 74.86 & 80.00 & \textbf{86.67} & 64.76 & 84.76 & 86.48 \\
& 0.05 & 85.90 & 82.10 & 81.52 & 80.57 & 86.48 & 63.81 & \textbf{86.86} & 86.67 \\
& 0.10 & 85.90 & 82.29 & 78.86 & 80.76 & 85.90 & 62.67 & \textbf{87.62} & 85.71 \\
& 0.15 & 86.29 & 81.33 & 68.76 & 82.29 & 86.48 & 61.90 & \textbf{86.86} & 85.71 \\
& 0.20 & \textbf{86.67} & 81.71 & 71.62 & 82.29 & 85.33 & 61.90 & 84.38 & 85.52 \\
\midrule  

\multirow{5}{*}{CIFAR10}
& 0.00 & 74.40 & 74.25 & 70.00 & 47.50 & 67.75 & 51.00 & 74.75 & \textbf{75.55} \\
& 0.05 & 73.90 & 74.05 & 69.90 & 48.80 & 67.80 & 51.00 & \textbf{75.10} & 74.75 \\
& 0.10 & 74.55 & 74.20 & 69.45 & 48.95 & 67.85 & 50.90 & 74.30 & \textbf{74.60} \\
& 0.15 & 73.65 & 74.40 & 69.65 & 54.45 & 67.30 & 50.95 & \textbf{74.90} & 74.70 \\
& 0.20 & 74.85 & 73.55 & 70.40 & 47.50 & 66.15 & 50.90 & 74.50 & \textbf{75.20} \\
\midrule  

\multirow{5}{*}{CaltechFace}
& 0.00 & 95.67 & 96.41 & 95.39 & 91.24 & \textbf{100.00} & 60.37 & 96.31 & 98.71 \\
& 0.05 & 96.04 & 95.21 & 95.48 & 91.24 & \textbf{100.00} & 60.37 & 95.48 & 98.53 \\
& 0.10 & 95.02 & 95.39 & 92.35 & 91.34 & \textbf{100.00} & 60.37 & 95.02 & 98.53 \\
& 0.15 & 96.13 & 95.85 & 87.10 & 91.43 & \textbf{100.00} & 60.37 & 95.02 & 98.25 \\
& 0.20 & 95.85 & 96.13 & 93.46 & 90.32 & \textbf{100.00} & 60.37 & 94.65 & 98.34 \\
\midrule  

\multirow{5}{*}{BCI}
& 0.00 & 50.67 & 51.00 & 49.67 & 45.00 & 45.00 & 46.67 & 48.33 & \textbf{56.00} \\
& 0.05 & 52.00 & 50.33 & 49.00 & 45.67 & 45.67 & 46.33 & 49.33 & \textbf{53.33} \\
& 0.10 & 52.00 & 50.33 & 50.33 & 45.67 & 46.00 & 47.00 & 48.33 & \textbf{53.33} \\
& 0.15 & 51.67 & 51.00 & 50.67 & 46.67 & 45.00 & 48.00 & 48.33 & \textbf{54.33} \\
& 0.20 & \textbf{52.33} & 49.33 & 48.67 & 48.33 & 49.00 & 49.00 & 48.33 & 50.00 \\
\midrule  

\multirow{5}{*}{WDBC}
& 0.00 & 94.15 & 95.09 & 92.05 & 84.80 & 94.15 & 97.08 & 96.49 & \textbf{98.25} \\
& 0.05 & 94.15 & 95.20 & 91.93 & 83.04 & 94.15 & 96.96 & 97.31 & \textbf{97.89} \\
& 0.10 & 94.04 & 95.20 & 92.16 & 91.46 & 94.15 & 97.31 & 97.54 & \textbf{98.25} \\
& 0.15 & 94.15 & 95.56 & 74.85 & 93.92 & 94.27 & 96.96 & 96.84 & \textbf{98.13} \\
& 0.20 & 93.92 & 95.56 & 93.57 & 91.58 & 94.15 & 96.84 & 97.08 & \textbf{97.31} \\
\midrule  

Average & / & 81.82 & 80.94 & 77.43 & 73.83 & 80.22 & 64.05 & 81.86 & \textbf{83.59} \\
\bottomrule  
\end{tabular}
\end{adjustbox}
\end{table*}
%%%%%%%%%%%%%%%%%%

\begin{table*}[t]
\caption{Classification accuracy (\%) under different salt-and-pepper noise levels.}
\label{table3}
\centering
\setlength{\tabcolsep}{3pt}  
\renewcommand{\arraystretch}{1.12}  
\small  

\begin{adjustbox}{width=\textwidth}
\begin{tabular}{lccccccccc}
\toprule  
Datasets & Noise
& Hinge-SMM
& Pinball-SMM
& Ramp-SMM
& LS-SMM
& Linear-SVM
& RBF-SVM
& Poly-SVM
& HL-SMM \\
\midrule  

\multirow{5}{*}{SPAMBASE}
& 0.00 & 88.97 & 88.23 & 89.45 & 86.30 & 87.75 & 71.16 & 90.43 & \textbf{90.96} \\
& 0.05 & 87.96 & 87.80 & 86.42 & 84.68 & 86.77 & 60.67 & 86.03 & \textbf{89.29} \\
& 0.10 & 87.00 & 86.97 & 85.28 & 84.52 & 85.78 & 60.58 & 84.00 & \textbf{87.87} \\
& 0.15 & 85.97 & 85.28 & 83.45 & 83.93 & 84.09 & 60.58 & 81.54 & \textbf{86.17} \\
& 0.20 & 84.83 & 84.52 & 82.72 & 82.51 & 83.45 & 60.58 & 79.19 & \textbf{85.77} \\
\midrule  

\multirow{5}{*}{IONO}
& 0.00 & 86.29 & 83.24 & 71.05 & 80.00 & \textbf{86.67} & 64.76 & 84.76 & 86.48 \\
& 0.05 & \textbf{85.71} & 80.76 & 70.10 & 82.29 & 85.14 & 62.10 & 83.62 & 84.95 \\
& 0.10 & \textbf{84.00} & 79.81 & 73.90 & 82.67 & 83.62 & 61.90 & 79.43 & 83.24 \\
& 0.15 & \textbf{83.24} & 77.33 & 70.67 & 82.29 & 82.29 & 61.90 & 80.95 & 81.52 \\
& 0.20 & \textbf{82.67} & 74.86 & 71.43 & 80.00 & 80.00 & 61.90 & 78.10 & \textbf{82.67} \\
\midrule  

\multirow{5}{*}{CIFAR10}
& 0.00 & 74.50 & 73.75 & 70.65 & 47.50 & 67.75 & 51.00 & 74.75 & \textbf{75.55} \\
& 0.05 & \textbf{74.65} & 73.80 & 70.75 & 52.30 & 69.35 & 50.65 & 73.30 & 74.50 \\
& 0.10 & 73.70 & 74.40 & 70.25 & 49.10 & 66.70 & 50.55 & 71.75 & \textbf{74.50} \\
& 0.15 & 73.60 & \textbf{74.30} & 69.65 & 50.40 & 68.65 & 50.00 & 70.85 & 73.60 \\
& 0.20 & 73.25 & 72.95 & 68.75 & 51.70 & 68.70 & 50.00 & 70.35 & \textbf{73.95} \\
\midrule  

\multirow{5}{*}{CaltechFace}
& 0.00 & 97.14 & 95.58 & 87.10 & 91.24 & \textbf{100.00} & 60.37 & 96.31 & 98.71 \\
& 0.05 & 97.33 & 96.77 & 91.61 & 90.69 & \textbf{100.00} & 60.37 & 95.21 & 98.06 \\
& 0.10 & 96.31 & 94.10 & 93.82 & 91.34 & \textbf{100.00} & 60.37 & 94.75 & 98.06 \\
& 0.15 & 96.13 & 95.76 & 95.58 & 89.59 & \textbf{100.00} & 60.37 & 96.31 & 97.70 \\
& 0.20 & 95.67 & 95.30 & 93.64 & 87.83 & \textbf{100.00} & 60.37 & 91.98 & 97.70 \\
\midrule  

\multirow{5}{*}{BCI}
& 0.00 & 51.33 & 49.33 & 48.33 & 45.00 & 45.00 & 46.67 & 48.33 & \textbf{56.00} \\
& 0.05 & 50.33 & 50.33 & 50.00 & 50.33 & 49.33 & 50.67 & 48.67 & \textbf{53.33} \\
& 0.10 & 53.67 & 51.33 & 49.67 & \textbf{55.00} & \textbf{55.00} & 51.67 & 49.00 & 52.33 \\
& 0.15 & 53.67 & 48.67 & 48.00 & 56.33 & \textbf{57.33} & 51.00 & 49.33 & 52.33 \\
& 0.20 & 54.00 & 49.00 & 49.00 & 56.67 & \textbf{59.33} & 50.00 & 48.33 & 56.00 \\
\midrule  

\multirow{5}{*}{WDBC}
& 0.00 & 94.15 & 95.32 & 92.16 & 84.80 & 94.15 & 97.08 & 96.49 & \textbf{98.25} \\
& 0.05 & 93.68 & 95.79 & 92.28 & 93.33 & 93.80 & 96.61 & 96.14 & \textbf{97.78} \\
& 0.10 & 93.80 & 95.20 & 92.63 & 94.85 & 92.98 & 96.73 & 96.37 & \textbf{96.96} \\
& 0.15 & 92.98 & 95.91 & 92.05 & 94.04 & 92.40 & \textbf{96.96} & 95.91 & 95.20 \\
& 0.20 & 93.33 & 95.91 & 91.11 & 94.39 & 91.46 & 96.02 & \textbf{96.14} & 96.02 \\
\midrule  

Average & / & 81.33 & 80.08 & 76.72 & 75.19 & 80.58 & 63.79 & 79.61 & \textbf{82.52} \\
\bottomrule  
\end{tabular}
\end{adjustbox}
\end{table*}

\begin{figure*}[t]
    \centering
    \begin{subfigure}[b]{0.30\textwidth}
        \centering
        \includegraphics[width=\linewidth]{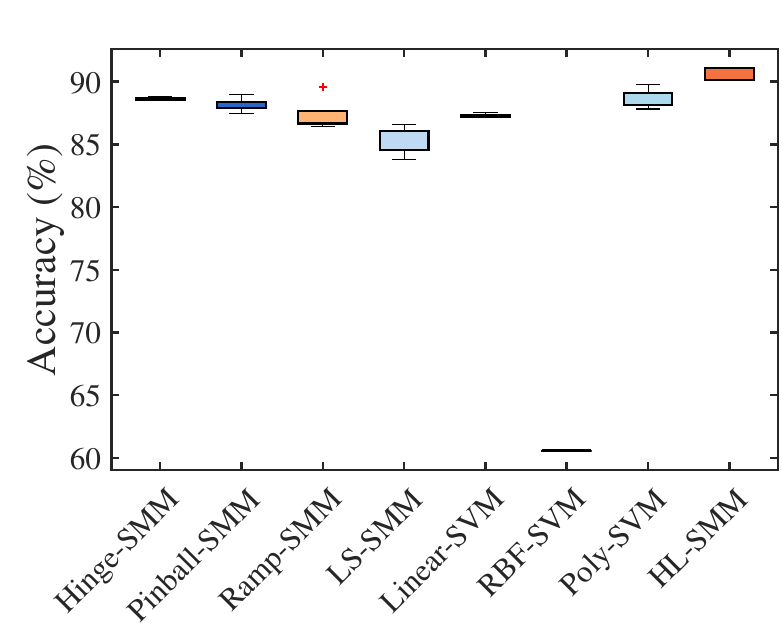}
        \caption{SPAMBASE}
    \end{subfigure}
     \hfill    
       \begin{subfigure}[b]{0.30\textwidth}
        \centering
        \includegraphics[width=\linewidth]{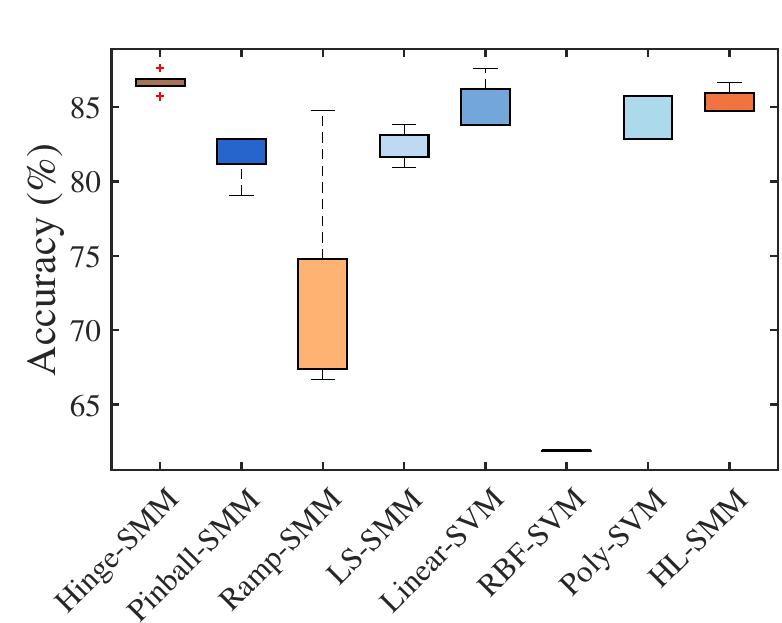}
        \caption{IONO}
    \end{subfigure}
    \hfill 
       \begin{subfigure}[b]{0.30\textwidth}
        \centering
        \includegraphics[width=\linewidth]{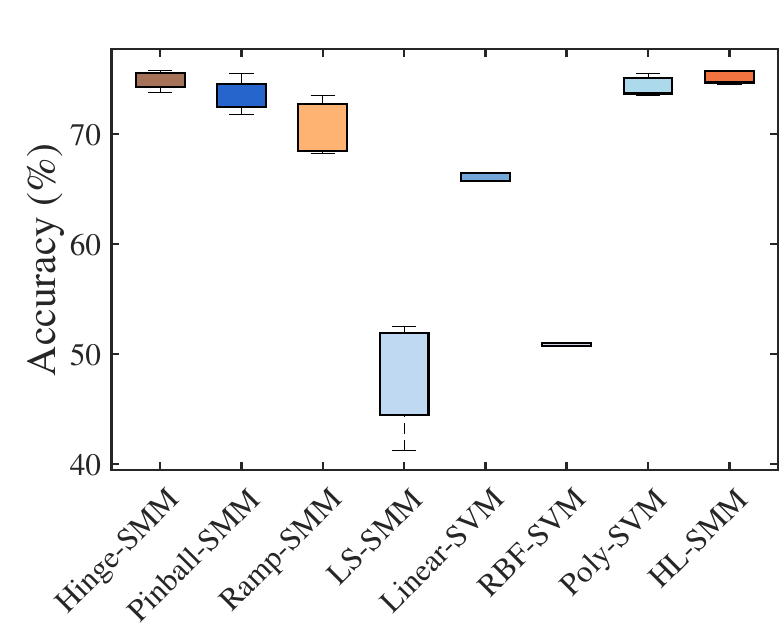}
        \caption{CIFAR10}
    \end{subfigure}
    
  %      \vspace{0.15cm} 
        
        \begin{subfigure}[b]{0.30\textwidth}
        \centering
        \includegraphics[width=\linewidth]{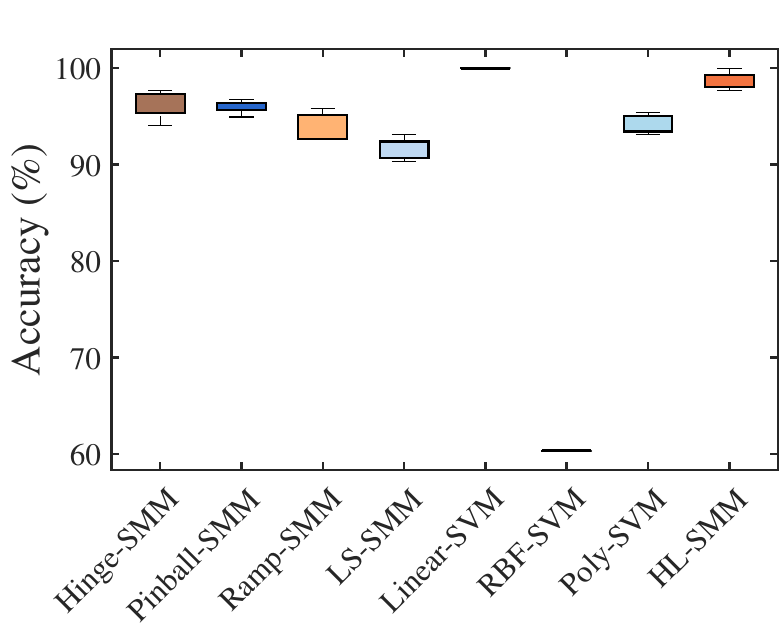}
        \caption{CaltechFace}
    \end{subfigure}     
        \hfill
    \begin{subfigure}[b]{0.30\textwidth}
        \centering
        \includegraphics[width=\linewidth]{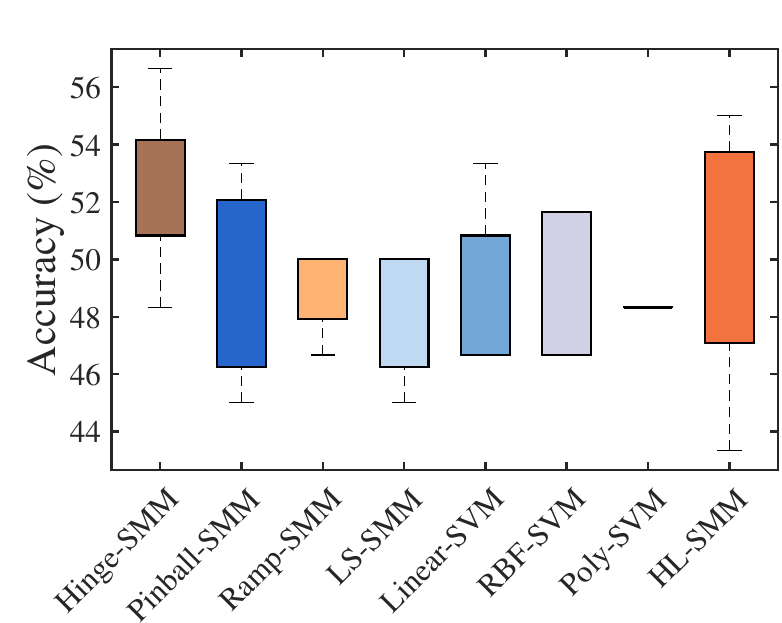}
        \caption{BCI}
    \end{subfigure}
        \hfill 
    \begin{subfigure}[b]{0.30\textwidth}
        \centering
        \includegraphics[width=\linewidth]{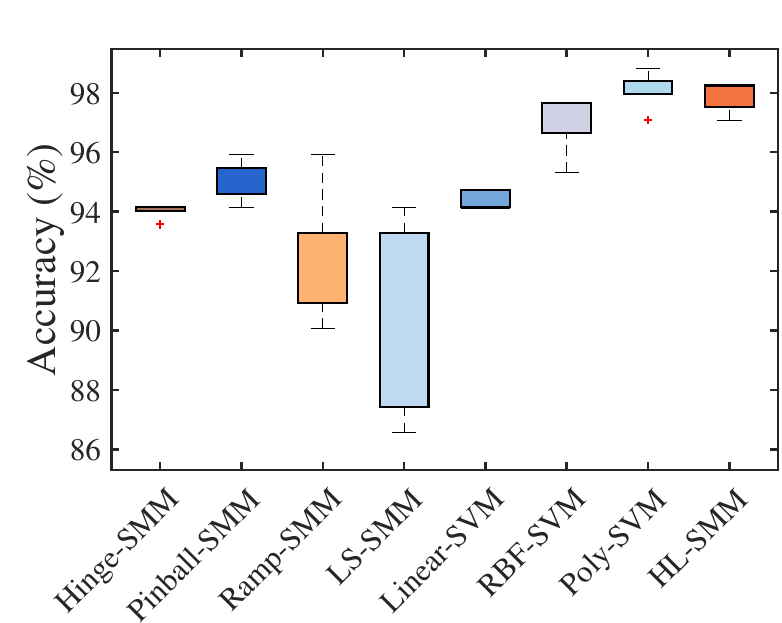}
        \caption{WDBC}
    \end{subfigure}

    \caption{Robustness comparison under 20\% Gaussian noise.}
    \label{fig2}
\end{figure*}

\begin{figure*}[t]
    \centering
    \begin{subfigure}[b]{0.30\textwidth}
        \centering
        \includegraphics[width=\linewidth]{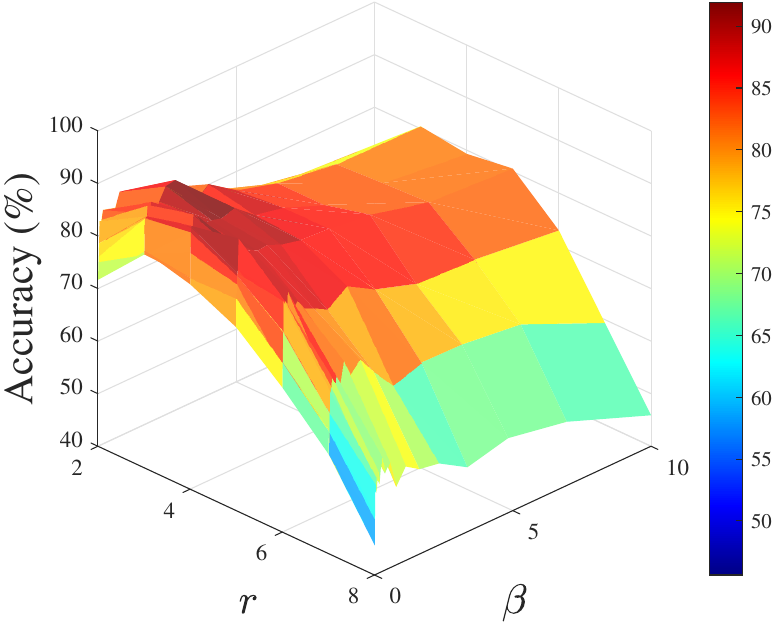}
        \caption{SPAMBASE} 
    \end{subfigure}
      \hfill  
      \begin{subfigure}[b]{0.30\textwidth}
        \centering
        \includegraphics[width=\linewidth]{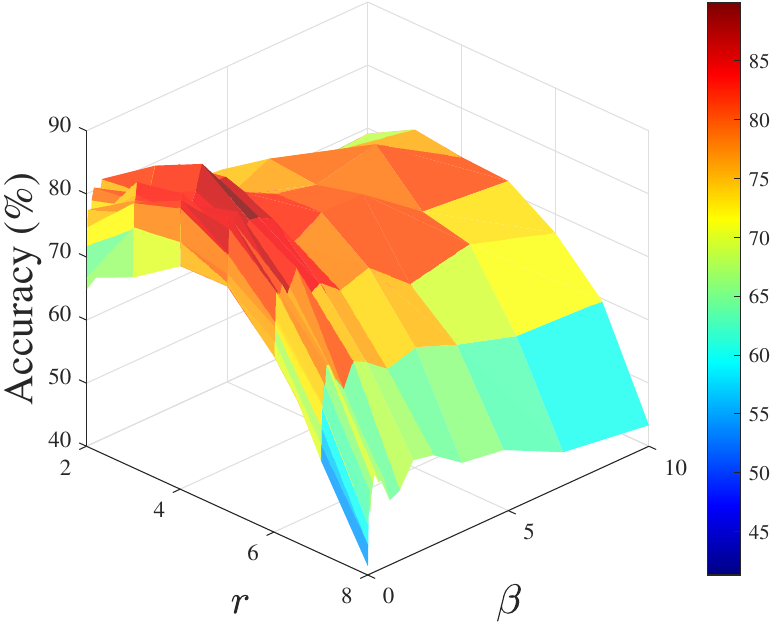}
        \caption{IONO} 
    \end{subfigure}  
      \hfill
       \begin{subfigure}[b]{0.30\textwidth}
        \centering
        \includegraphics[width=\linewidth]{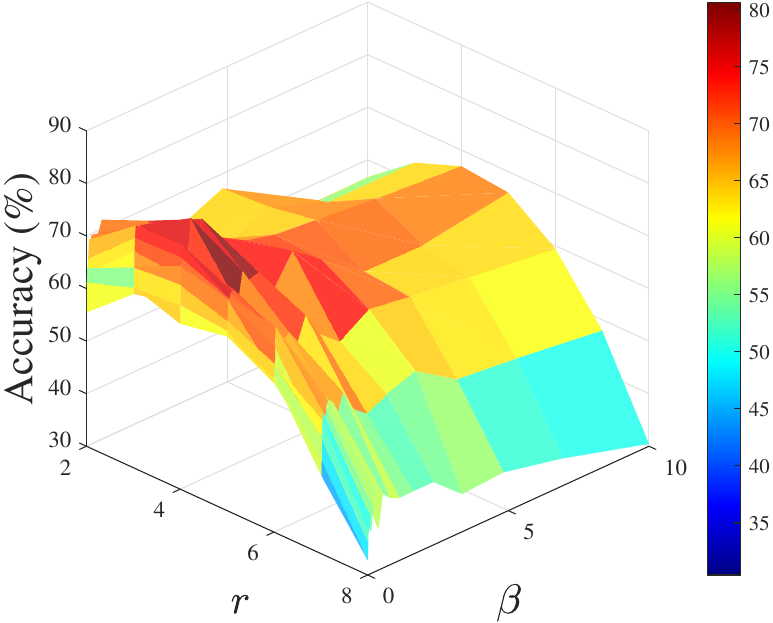}
        \caption{CIFAR10} %
    \end{subfigure}   
       % \vspace{0.15cm}   
        \begin{subfigure}[b]{0.30\textwidth}
        \centering
        \includegraphics[width=\linewidth]{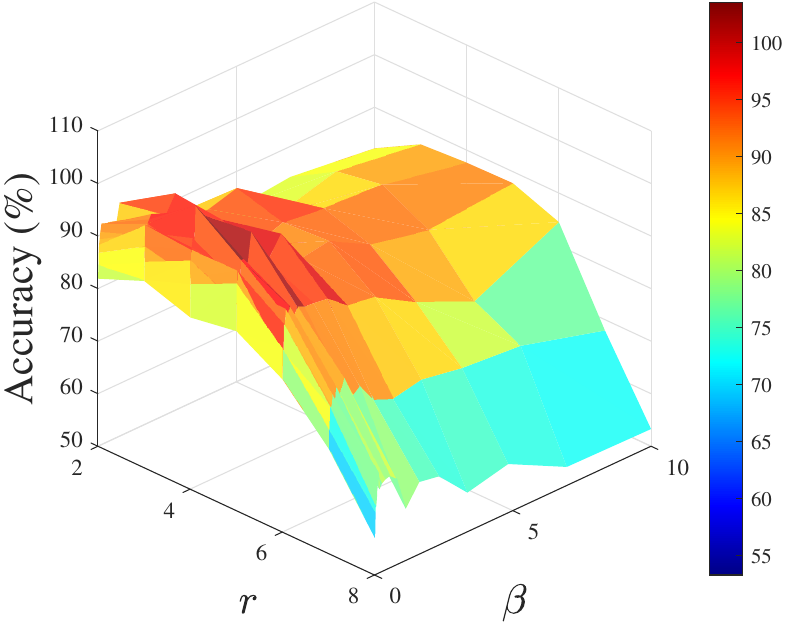}
        \caption{CaltechFace} 
    \end{subfigure}  
         \hfill   
    \begin{subfigure}[b]{0.30\textwidth}
        \centering
        \includegraphics[width=\linewidth]{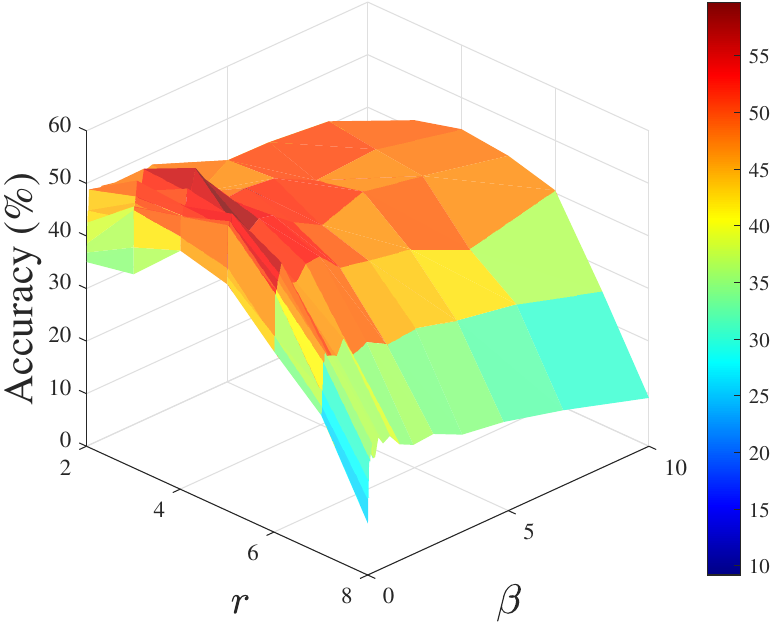}
        \caption{BCI} 
    \end{subfigure}
    \hfill
    \begin{subfigure}[b]{0.30\textwidth}
        \centering
        \includegraphics[width=\linewidth]{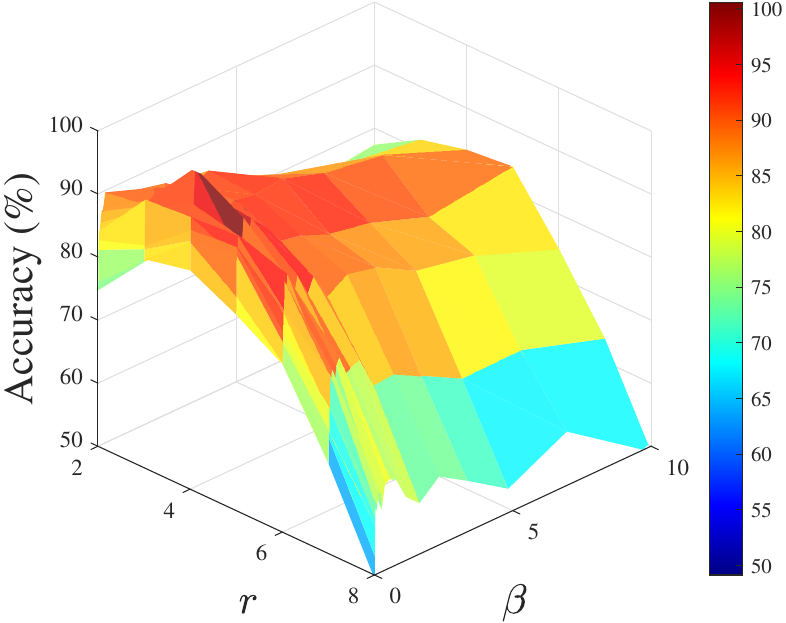}
        \caption{WDBC} 
    \end{subfigure}

    \caption{Parameter sensitivity analysis of $r$ and $\beta$.}
    \label{fig3}
\end{figure*}

The hyperparameters are optimized on the following candidate sets: $\beta \in \{0.01, 0.1, 0.5\}$, $\sigma \in \{0.01,0.1\}$, $r \in \{4,10\}$, $\tau_1, \tau_2, \tau_3\in \{10^{-4}, 10^{-3}, 10^{-2}\}$, the maximum iteration $\texttt{maxit} = 1000$. Note that $\beta$ and $r$ are the two key hyperparameters that balance margin maximization and classification error. A detailed sensitivity analysis  will be presented in subsequent sections. For the remaining baseline methods, all hyperparameters are fixed to the values reported in their original papers.

To evaluate the classification performance, it employs accuracy as the primary metric, which is defined as
\begin{equation*}
\text{Accuracy} = \frac{\text{TP + TN}}{\text{TP + TN + FP + FN}} \times 100\%,
\end{equation*}
where $\text{TP}, \text{TN}, \text{FP}$, and $\text{FN}$ represent true positives, true negatives, false positives, and false negatives, respectively. A higher classification accuracy indicates a better classification result.
 %The final results are listed in Table~\ref{table1}.
 
\subsection{Experimental Results}

Table \ref{table1} shows the classification accuracy of different methods on six datasets, with the best results highlighted in bold. It can be observed that our proposed HL-SMM achieves the best classification accuracy on average, improving upon the second-best by 2.32\%.
In particular, on the BCI dataset, HL-SMM significantly outperforms the best competing baseline methods, demonstrating the effectiveness of the proposed robust low-rank formulation in challenging scenarios.
On the SPAMBASE and CIFAR10 datasets, HL-SMM also yields consistent improvements over the SMM variants with convex or nonconvex losses, indicating that the Heaviside loss function can better enhance the performance of SMM.
On the IONO and CaltechFace datasets, although Linear-SVM achieves the highest accuracy, HL-SMM still provides competitive performance, and significantly improves upon LS-SMM and other SMM baselines.

To further evaluate robustness, Table \ref{table2} and Table \ref{table3} report classification results by injecting different Gaussian noise and salt-and-pepper noise, respectively.
It can be concluded that as the noise level increases, most baseline methods exhibit noticeable performance degradation, especially vector-based SVM methods and LS-SMM.
In contrast, our proposed HL-SMM maintains relatively stable accuracy across different noise levels on multiple datasets.
For example, on the SPAMBASE dataset, HL-SMM achieves accuracy above 90\% for all noise levels, while RBF-SVM drops substantially under noise corruption.
Similar robustness trends can also be seen on the WDBC and CaltechFace datasets, indicating that the proposed Heaviside loss can effectively suppress the influence of outliers, and the explicit rank constraint helps preserve the intrinsic low-dimensional structure of matrix data.
Overall, these results validate that our proposed method is not only competitive in clean settings but also more robust in the presence of noise.

Fig. \ref{fig2} presents boxplots of classification accuracy under 20\% Gaussian noise.
Generally speaking, our proposed HL-SMM obtains comparable median accuracy with a tighter interquartile range across datasets, indicating improved robustness and stability under severe corruption.
In particular, on the SPAMBASE and CIFAR10 datasets, HL-SMM shows high and concentrated accuracy distributions, whereas several baselines exhibit noticeable performance degradation and larger variability.
These results confirm that our proposed method generalizes more reliably than existing methods in high-noise scenarios.

\subsection{Discussions}
\subsubsection{Parameter  Analysis}\label{para_sen}
Fig. \ref{fig3} depicts the accuracy landscapes of our proposed HL-SMM over the rank constraint $r$ and the regularization parameter $\beta$ on six datasets.
A clear high-performance plateau can be observed on all datasets, indicating that HL-SMM is not overly sensitive to hyperparameter tuning and yields consistently strong performance within a broad region of $(r,\beta)$.
It can be found that the best results are attained at moderate ranks, whereas too small $r$ tends to restrict model capacity and degrade accuracy, and overly large $r$ may weaken the low-rank inductive bias and reduce robustness.
Regarding $\beta$, the performance typically peaks at intermediate values, while extreme choices may lead to either insufficient margin enforcement or over-penalization.
Therefore, the explicit rank constraint provides an effective mechanism for controlling model complexity and contributes to stable generalization across different datasets.

\subsubsection{Convergence Visualization}
Fig. \ref{fig4} reports the convergence behavior of our proposed PAM algorithm on six  datasets.
The evolution of the iterative loss exhibits a rapid transient phase, followed by a clear tendency to saturate.
This trend indicates that the norm $\|W^{k+1} - W^k\|_\textrm{F}$ decays with the number of iterations of Algorithm \ref{Alg}. 
This result, to some extent, proves that our proposed method can converge to the optimal stationary point.

\begin{figure*}[t]
    \centering
    \begin{subfigure}[b]{0.30\textwidth}
        \centering
        \includegraphics[width=\linewidth]{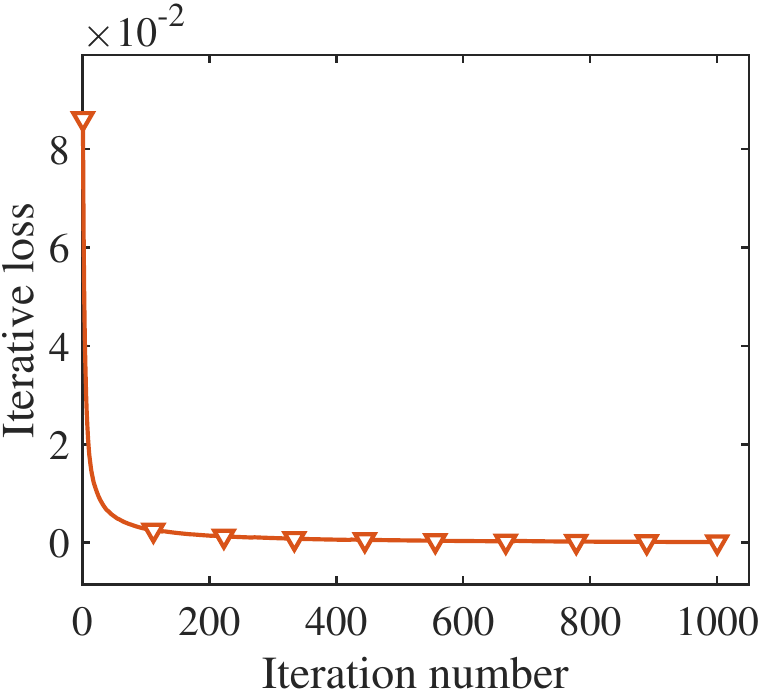}
        \caption{SPAMBASE} 
    \end{subfigure}
     \hfill   
      \begin{subfigure}[b]{0.30\textwidth}
        \centering
        \includegraphics[width=\linewidth]{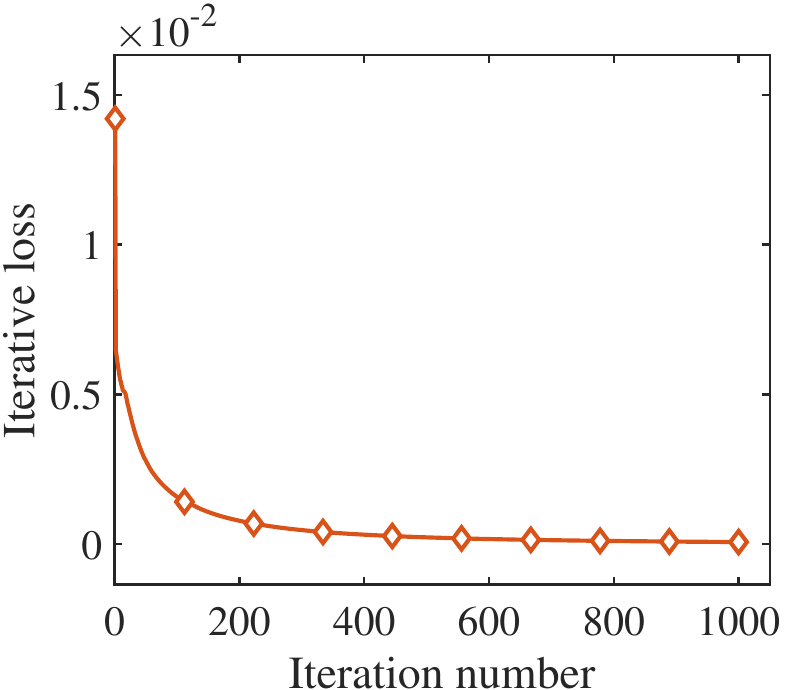}
        \caption{IONO} 
    \end{subfigure}
    \hfill  
      \begin{subfigure}[b]{0.30\textwidth}
        \centering
        \includegraphics[width=\linewidth]{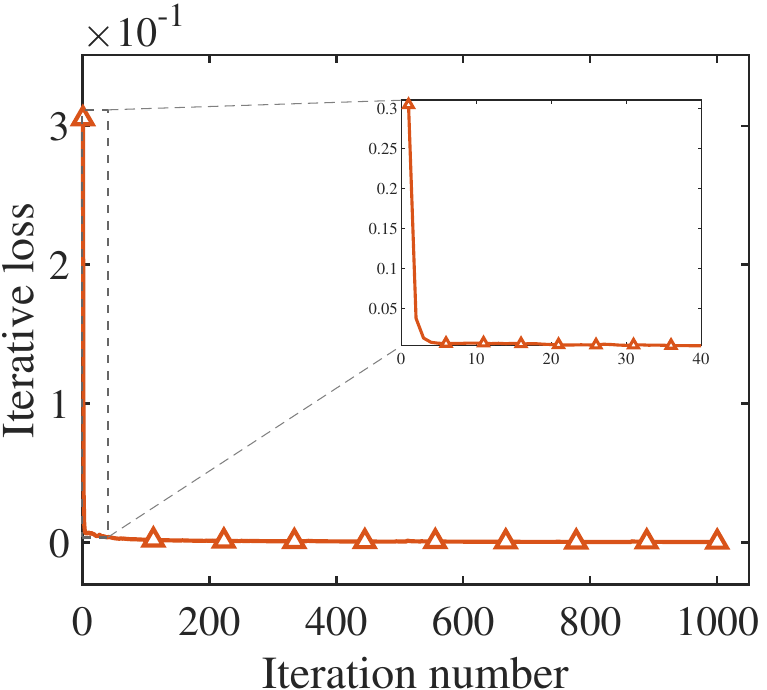}
        \caption{CIFAR10} 
    \end{subfigure}

    \begin{subfigure}[b]{0.30\textwidth}
        \centering
        \includegraphics[width=\linewidth]{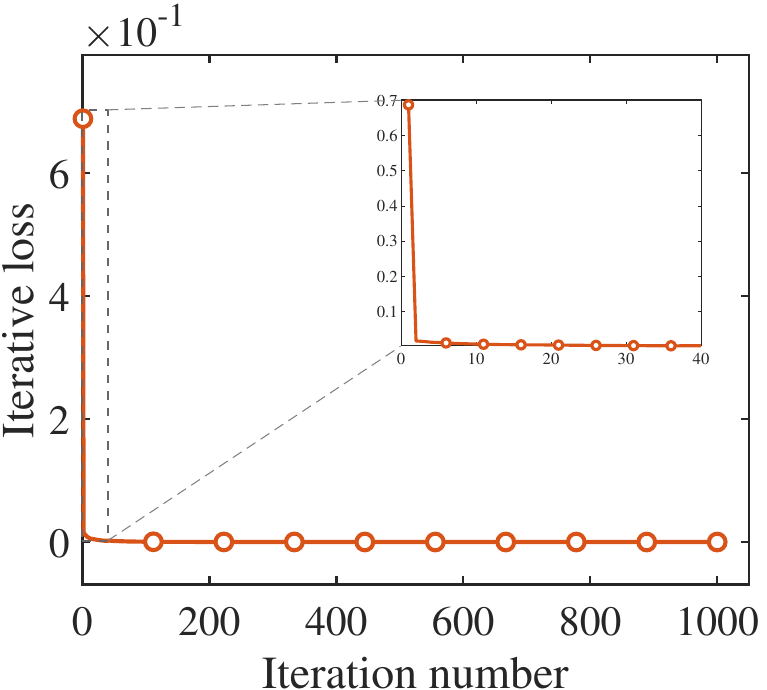}
        \caption{CaltechFace} 
    \end{subfigure}
    \hfill
    \begin{subfigure}[b]{0.30\textwidth}
        \centering
        \includegraphics[width=\linewidth]{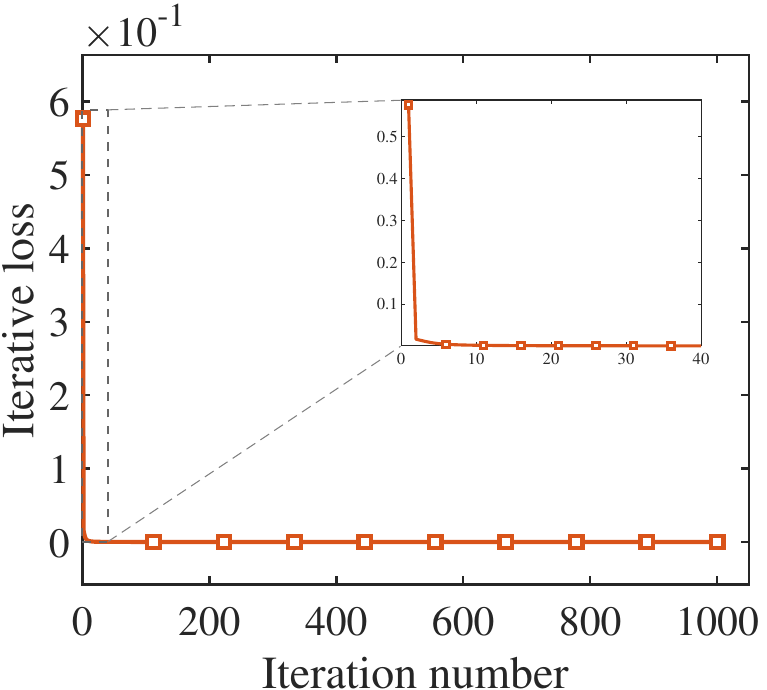}
        \caption{BCI} 
    \end{subfigure}
    \hfill    
    \begin{subfigure}[b]{0.30\textwidth}
        \centering
        \includegraphics[width=\linewidth]{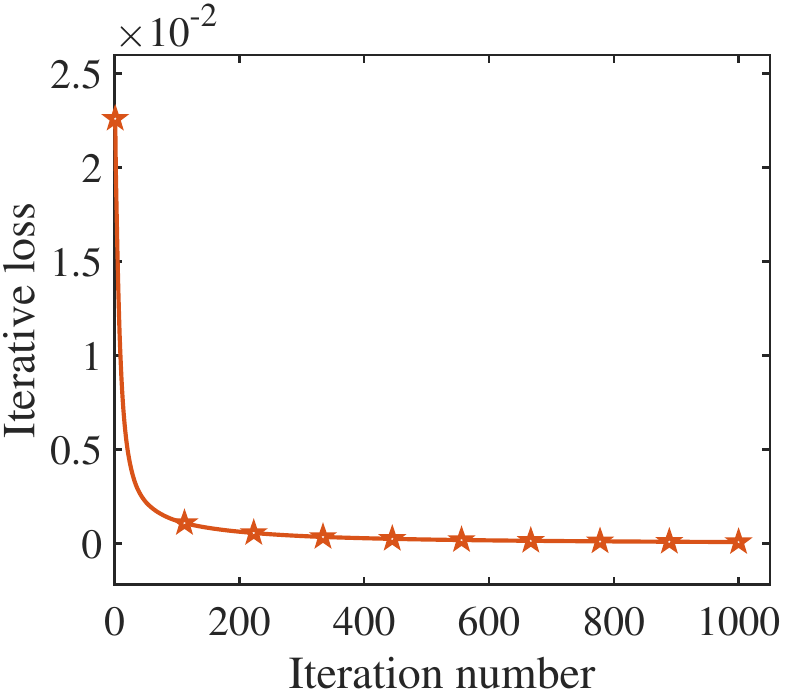}
        \caption{WDBC} 
    \end{subfigure}

    \caption{Convergence curves of the proposed algorithm.}
    \label{fig4}
\end{figure*}

\section{Conclusion}\label{sec5}

This paper proposes a novel low-rank classification model called HL-SMM, which innovatively integrates the Heaviside loss with an explicit low-rank constraint. Unlike the existing work, the Heaviside loss is highly robust to noise and outliers while preserving discriminative information from normal data. Furthermore, we establish the sufficient and necessary optimality conditions, thus designing an effective PAM algorithm where all subproblems have closed-form solutions. Numerical experimental results demonstrate the superior classification performance and noise robustness over state-of-the-art SMM and SVM variants.

In the future, we are interested in developing more efficient second-order optimization algorithms with rigorous sequence convergence. Besides, combining SMM with deep neural networks is a promising direction, which can not lnreduce dependence on hyperparameters and enhance the generalization ability \cite{liu2025star}.

\backmatter

\bibliography{my_references}

\end{document}